%% file: neurips_2026.tex
\documentclass{article}

\usepackage[preprint]{neurips_2026}

\usepackage[utf8]{inputenc} 
\usepackage[T1]{fontenc}    
\usepackage{url}            
\usepackage{booktabs}       
\usepackage{amsfonts}       
\usepackage{nicefrac}       
\usepackage{microtype}      
\usepackage{xcolor}         

\usepackage{microtype}
\usepackage{graphicx}
\usepackage{subcaption}

\usepackage[utf8]{inputenc}
\usepackage[T1]{fontenc}
\usepackage[colorlinks=true,
            linkcolor=orange,
            citecolor=orange,
            urlcolor=orange,
            filecolor=orange,
            anchorcolor=orange]{hyperref}
\usepackage{tcolorbox}
\usepackage{wrapfig}
\usepackage{caption}
\usepackage{amsmath}
\usepackage{amssymb}
\usepackage{mathtools}
\usepackage{amsthm}
\usepackage{multirow}
\usepackage{bbm}
\usepackage{algorithm}
\usepackage{algorithmic}
\usepackage{setspace}
\usepackage{graphicx}
\usepackage[capitalize,noabbrev]{cleveref}

\tcbuselibrary{breakable, skins}
\definecolor{myorange}{RGB}{255, 140, 0}

\usepackage{aliascnt}

\theoremstyle{plain}
\newtheorem{theorem}{Theorem}[section]

\newaliascnt{proposition}{theorem}
\newtheorem{proposition}[proposition]{Proposition}
\aliascntresetthe{proposition}

\newaliascnt{assumption}{theorem}
\newtheorem{assumption}[assumption]{Assumption}
\aliascntresetthe{assumption}

\newaliascnt{lemma}{theorem}

\aliascntresetthe{lemma}

\newaliascnt{corollary}{theorem}
\newtheorem{corollary}[corollary]{Corollary}
\aliascntresetthe{corollary}

\theoremstyle{remark}
\newaliascnt{remark}{theorem}

\aliascntresetthe{remark}

\theoremstyle{definition}
\newtheorem{definition}{Definition}

\crefname{theorem}{Theorem}{Theorems}
\Crefname{theorem}{Theorem}{Theorems}
\crefname{proposition}{Proposition}{Propositions}
\Crefname{proposition}{Proposition}{Propositions}
\crefname{assumption}{Assumption}{Assumptions}
\Crefname{assumption}{Assumption}{Assumptions}
\crefname{lemma}{Lemma}{Lemmas}
\Crefname{lemma}{Lemma}{Lemmas}
\crefname{remark}{Remark}{Remarks}
\Crefname{remark}{Remark}{Remarks}
\crefname{corollary}{Corollary}{Corollaries}
\Crefname{corollary}{Corollary}{Corollaries}
\crefname{definition}{Definition}{Definitions}
\Crefname{definition}{Definition}{Definitions}

\newcommand{\sotabest}[1]{\textcolor{myorange}{$\mathbf{#1}$}}
\newcommand{\sotasecond}[1]{$\underline{#1}$}

\title{NFTR: From Provable Mode-Averaging to Geodesic Subgoal Selection in Offline Goal-Conditioned RL}

\author{
Erdemt Bao$^{1,*}$ \quad
Xing Lei$^{2,*\,\dagger}$ \quad
Jun Chen$^{3}$ \\
$^{1}$Huazhong University of Science and Technology \quad
$^{2}$Xi'an Jiaotong University \\
$^{3}$University of Electronic Science and Technology of China \\
\texttt{baoerdemt366@gmail.com} \quad
\texttt{leixing@stu.xjtu.edu.cn} \quad
\texttt{junchen@std.uestc.edu.cn}
}

\begin{document}

\maketitle

\begingroup
\renewcommand{\thefootnote}{}
\footnotetext{* Equal contribution. \hspace{1.5em} $\dagger$ Corresponding author.}
\addtocounter{footnote}{-1}
\endgroup

\begin{abstract}
    Hierarchical Implicit Q-Learning (HIQL), an offline goal-conditioned RL method, selects subgoals by value-function advantages alone. This rule has two coupled failure modes. \emph{Optimistic bias} treats lucky stochastic outcomes as skillful choices, and \emph{mode collapse} reduces a multi-modal subgoal distribution to a single Gaussian mean that often falls in unreachable regions. We propose \textbf{NFTR} (\textbf{N}ormalizing \textbf{F}lows subgoal policies with \textbf{T}riangle-slack \textbf{R}eweighting). A conditional Normalizing Flow replaces the Gaussian policy, and a closed-form mode-averaging result identifies NFs as the minimal generative class for AWR-based subgoal selection. A triangle slack score, built on the architectural triangle inequality without relying on distance accuracy, multiplicatively corrects the AWR weight to downweight subgoals whose detour cost exceeds average reachability. Triangle-slack vanishes on geodesics in deterministic MDPs and remains a conservative upper bound on composability violation under stochastic dynamics. The RWDR objective preserves AWR's population-level monotonic improvement and admits a three-term suboptimality decomposition. Together, these two ingredients yield subgoal selection that provably avoids the Gaussian collapse described above and remains stable under stochastic dynamics. GitHub page: \url{https://github.com/erdemtbao/NFTR}
\end{abstract}


\section{Introduction} \label{sec:introduction}

Offline goal-conditioned reinforcement learning (GCRL) \citep{ogbench_park2025} aims to learn policies that can reach arbitrary goals from static datasets without environment interaction. This setting is particularly valuable for robotics and autonomous systems where online data collection is expensive or dangerous. Among recent advances, Hierarchical Implicit Q-Learning (HIQL)~\citep{park2023hiql} has achieved state-of-the-art performance by decomposing goal-reaching into high-level subgoal selection and low-level action execution, using a shared value function to guide both levels.

Two coupled failures of HIQL motivate this work. First, because the high-level subgoal is selected purely by the learned value function, stochastic environments produce \textbf{optimistic bias}. A trajectory reaching the goal through favorable randomness is treated identically to one succeeding through repeatable reachability, so off-path subgoals receive inflated AWR weights. Second, because the high-level policy is a unimodal Gaussian, multi-modal subgoal distributions induced by branching corridors or stochastic outcomes trigger \textbf{mode collapse}. The Gaussian mean is regressed toward a convex combination of mode centers, which often falls inside an unreachable region rather than on any valid path. Together these failures raise a single question. \textit{Can we learn subgoals that are simultaneously high-value, reliably reachable, and faithful to multi-modal path structure?}

We answer this question with \textbf{NFTR} (\textbf{N}ormalizing \textbf{F}low subgoal policies with \textbf{T}riangle-slack \textbf{R}eweighting). Normalizing Flows replace HIQL's Gaussian high-level policy, and a triangle-slack score derived from a built-in-triangle-inequality quasimetric reweights the AWR objective so that geometrically inconsistent subgoals act as a mechanistic proxy for unreliable lucky transitions. Our contributions are theoretical, methodological, and empirical. We characterize triangle-slack on both deterministic and stochastic MDPs, decompose the RWDR suboptimality into value, sampling, and bounded geometric regularization terms, and preserve population-level monotonic improvement. NFTR is the first hierarchical offline GCRL method to couple multi-modal subgoal modeling with geometric composability filtering. On OGBench \citep{ogbench_park2025}, NFTR substantially improves over HIQL across stochastic, stitching, and manipulation tasks.

\section{Preliminaries}
\label{sec:background}
\subsection{Offline Goal-Conditioned Reinforcement Learning}
\label{sec:bg_gcrl}

We consider a controlled Markov process $\mathcal{M} = (\mathcal{S}, \mathcal{A}, P, \gamma, \mu)$ with state space $\mathcal{S}$, action space $\mathcal{A}$, transition dynamics $P(s'|s,a)$, discount factor $\gamma \in (0,1)$, and initial state distribution $\mu(s_0)$. A goal-conditioned policy $\pi(a|s,g)$ aims to reach goal state $g \in \mathcal{S}$ from current state $s$. The goal-conditioned value function $V^\pi(s,g)$ represents the expected discounted probability of reaching goal $g$ starting from state $s$ under policy $\pi$:
\begin{equation}
V^\pi(s,g) = \mathbb{E}_\pi\left[\sum_{t=0}^{\infty} \gamma^t \mathbf{1}[s_t = g] \mid s_0 = s\right].
\label{eq:value_function}
\end{equation}
In offline GCRL, we have access only to a static dataset $\mathcal{D} = \{(s_t, a_t, s_{t+1})\}_{t=1}^N$ collected by an unknown behavior policy $\beta$, with goals typically sampled from future states via hindsight relabeling~\citep{andrychowicz2017hindsight}. The challenge is to learn a policy $\pi$ that maximizes goal-reaching performance without additional environment interaction, requiring careful handling of distribution shift between the learned policy and the behavior policy~\citep{levine2020offline}.

\subsection{HIQL: Hierarchical Implicit Q-Learning}
\label{sec:bg_hiql}
HIQL~\citep{park2023hiql} factorizes goal-reaching into a high-level policy $\pi^H(z\mid s,g)$ that proposes the latent code $z=\phi([s;w])$ of a $k$-step waypoint $w$, and a low-level policy $\pi^L(a\mid s,z)$ that drives the agent toward it. The encoder $\phi:\mathcal{S}\times\mathcal{S}\to\mathbb{R}^d$ is an intermediate layer of the parameterized value function $V(s,\phi([s;g]))$, learned end-to-end with no separate representation objective; HIQL's Proposition~5.1 shows this representation is sufficient for optimal control in deterministic MDPs.

The shared value function is fitted with the action-free expectile-regression variant of IQL~\citep{kostrikov2022offline,park2023hiql}:
\begin{equation}
\mathcal{L}_V \;=\; \mathbb{E}_{(s,s')\sim\mathcal{D},\,g\sim p(g\mid\tau)}\left[L_\tau^2\bigl(r(s,g)+\gamma V^{\mathrm{tgt}}(s',g)-V_\theta(s,g)\bigr)\right],
\label{eq:value_loss}
\end{equation}
with $L_\tau^2(u)=|\tau-\mathbf{1}(u<0)|\cdot u^2$ for $\tau\in(0.5,1)$. The high-level policy is then extracted by advantage-weighted regression~\citep{peng2019advantage}:
\begin{equation}\label{eq:hiql_high}
\mathcal{L}_{\pi^H} \;=\; -\,\mathbb{E}_{(s,w,g)\sim\mathcal{D}}\left[\exp\bigl(\alpha_H A_{\mathrm{env}}(s,w,g)\bigr)\cdot\log\pi^H\bigl(\phi([s;w])\,\bigm|\,s,g\bigr)\right],
\end{equation}
where $A_{\mathrm{env}}(s,w,g)=V_\theta(w,g)-V_\theta(s,g)$ and $\alpha_H>0$. Following HIQL's Section~5.2, the regression target is the latent $z=\phi([s;w])$ rather than raw $w$, a choice required for image-based and consistently helpful for state-based settings; NFTR keeps it unchanged.

\textbf{Two technical hooks for our analysis.} HIQL's hierarchy is justified by a signal-to-noise argument~\citep[Proposition~4.1]{park2023hiql} that is silent on the high-level policy class, leaving the unimodal Gaussian inherited from POR~\citep{xu2022policy} unanalyzed. The action-free objective in \cref{eq:value_loss} is also unbiased only under deterministic dynamics~\citep[Section~3]{park2023hiql}, and AWR in \cref{eq:hiql_high} exponentially amplifies any resulting value overestimation. \cref{sec:nf_policy,sec:rwdr} exploit these two openings.

\subsection{Triangle Inequality as a Structural Primitive}
\label{sec:distance}
We single out one structural property, the triangle inequality, from the broader machinery of quasimetric learning. Earlier work~\citep{wang2023optimal,liu2023metric,myers2024learning,myers2025offline} treats the quasimetric as an \emph{estimation target}, where downstream performance scales with how accurately the distance is recovered. We instead treat the inequality alone as a \emph{structural prior} on the AWR weight, regardless of whether the underlying distance is recovered accurately. \cref{sec:experiments}\,(Q3) shows that an untrained $d_\theta$ already attains performance comparable to a trained one, supporting this design.

\begin{definition}[Quasimetric]
\label{def:quasimetric}
A function $d: \mathcal{S} \times \mathcal{S} \to \mathbb{R}_{\geq 0}$ is a \textbf{quasimetric} if it satisfies: (1) $d(s, s) = 0$, (2) $d(s, g) > 0$ for $s \neq g$, and (3) $d(s, g) \leq d(s, w) + d(w, g)$ for all $s, w, g$. Unlike metrics, quasimetrics need not be symmetric.
\end{definition}
In what follows, $d_\theta$ stands for a learned distance whose triangle inequality is guaranteed at the architectural level. The concrete MRN parameterization, training losses, and the trained versus untrained comparison are deferred to \cref{sec:implementation,sec:experiments}. Two consequences of this construction matter for our framework. First, the inequality holds \emph{architecturally}, so it is available as a structural prior even before any training. Second, the embedding implicitly captures \emph{average dataset reachability}, which gives the mechanistic basis for the lucky transition filtering described in \cref{sec:rwdr}. The first is what we use, the second is what makes the use interpretable.
\subsection{Normalizing Flows (NFs)}
\label{sec:bg_nf}
NFs~\citep{dinh2014nice,dinh2017density,papamakarios2021normalizing,ghugare2025normalizing} define probability distributions by transforming simple base distributions through invertible functions, enabling both exact density computation and efficient sampling.
\begin{definition}[Normalizing Flow]
A Normalizing Flow defines a distribution $p_\theta(z)$ over $z \in \mathbb{R}^d$ through an invertible transformation $f_\theta: \mathbb{R}^d \to \mathbb{R}^d$ applied to a base distribution $p_{\text{base}}(\epsilon)$:
\begin{equation}
z = f_\theta^{-1}(\epsilon), \quad \epsilon \sim p_{\text{base}}(\epsilon).
\end{equation}
The density is computed via the change-of-variables formula:
\begin{equation}
\log p_\theta(z) = \log p_{\text{base}}(f_\theta(z)) + \log \left|\det \frac{\partial f_\theta}{\partial z}\right|.
\label{eq:nf_density}
\end{equation}
\end{definition}

A conditional flow conditions the transformation $f_\theta$ on context $c$ to model $p_\theta(z\mid c)$, which we instantiate with RealNVP~\citep{dinh2017density} affine coupling layers in \cref{sec:implementation}.


\section{Method: NFTR}
\label{sec:method}

We present NFTR, which addresses HIQL's limitations through two complementary innovations: a Normalizing Flow high-level subgoal policy for multi-modal subgoal distributions (\cref{sec:nf_policy}), and triangle-slack composability reweighting for filtering geometrically inconsistent subgoals (\cref{sec:rwdr}). We then provide theoretical analysis of the combined method (\cref{sec:theory}) and describe implementation details (\cref{sec:implementation}). The overall architecture is illustrated in \cref{fig:overall}.

\begin{figure*}[t]
\centering
\includegraphics[width=\textwidth]{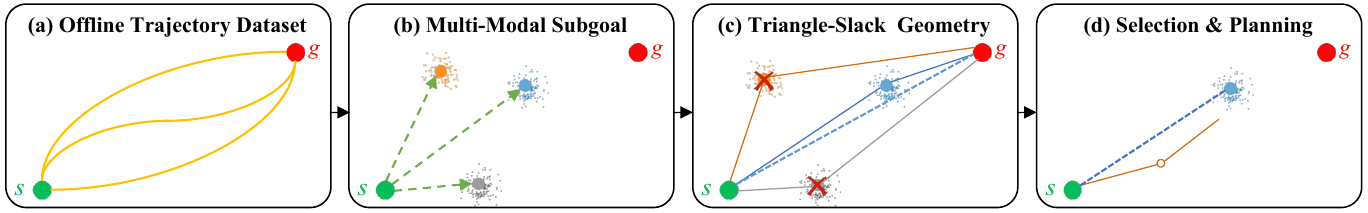}

\caption{\textbf{Overview of NFTR.} (a) Offline trajectories. (b) A Normalizing Flow high-level policy models a multi-modal subgoal distribution. (c) A quasimetric $d_\theta$ induces the slack $\Delta(s,w,g)=\mathrm{ReLU}(d(s,w){+}d(w,g){-}d(s,g))$, which downweights non-composable subgoals. (d) Execution yields a feasible stitched path. RWDR combines $A_{\text{env}}$ from $V_\theta$ and $\Delta$ from $d_\theta$ to train the high-level subgoal policy.}

\label{fig:overall}
    \vspace{-1.6em}
\end{figure*}

\subsection{Normalizing Flow High-Level Subgoal Policy}
\label{sec:nf_policy}
Two facts about HIQL's high-level training set drive the analysis. The latent code $z=\phi([s;w])$ is the deterministic image of the $k$-step waypoint $w=s_{t+k}$~\citep[Algorithm~1]{park2023hiql}, and for a fixed $(s,g)$ appearing in many trajectories the empirical distribution of $w$ is shaped by trajectory-level path multiplicity rather than single-step stochasticity. Branching corridors, teleporters that randomize landing sites, and observation noise that spreads the same physical waypoint across $\phi$-space all produce disjoint waypoint clusters separated by unreachable regions $\mathcal{S}_{\mathrm{wall}}\subset\mathbb{R}^d$. We model this as a state-dependent mixture
\begin{equation}
p_{\mathcal{D}}(z\mid s,g) \;=\; \sum_{m=1}^{M(s,g)} \pi_m(s,g)\,q_m(z\mid s,g),
\label{eq:dataset_mixture}
\end{equation}
with path-connected, pairwise disjoint component supports $\mathrm{supp}(q_m)$. We treat $\mathcal{S}_{\mathrm{wall}}$ as the empirical $\phi$-image of the physical unreachable region observed in our visualizations (\cref{fig:loglik_heatmap_task1}), since HIQL's representation-sufficiency~\citep[Proposition~5.1]{park2023hiql} is only exact in the deterministic case.

The next theorem characterizes the regime in which HIQL's POR-inherited unimodal Gaussian fails. To our knowledge it is the first closed-form characterization of HIQL's mode-collapse limitation.

\begin{theorem}[Mode-Averaging of Gaussian High-Level Subgoal Policies]
\label{thm:mode_averaging}
Let $\pi^H_\theta(z\mid s,g)=\mathcal{N}(z;\,\mu_\theta(s,g),\,\sigma^2 I_d)$ with $\sigma>0$ fixed and $\mu_\theta$ unconstrained, trained by HIQL's AWR objective
\begin{equation}
\mathcal{L}^{\mathrm{AWR}}(\theta) \;=\; -\,\mathbb{E}_{(s,w,g)\sim\mathcal{D}}\left[e^{\alpha_H A^{H}(s,w,g)}\,\log\mathcal{N}\bigl(\phi([s;w]);\,\mu_\theta(s,g),\,\sigma^2 I_d\bigr)\right],
\label{eq:awr_gaussian}
\end{equation}
with $\alpha_H>0$ and $e^{\alpha_H A^{H}}$ bounded above and integrable under $p_{\mathcal{D}}$ (enforced by the weight clipping in \cref{alg:training}). Let \cref{eq:dataset_mixture} hold with $M(s,g)\geq 2$ and write $w(z)$ for the deterministic preimage of $z=\phi([s;w])$. Define $p^{\star}_{s,g}(z)\coloneqq Z^{-1}_{s,g}e^{\alpha_H A^{H}(s,w(z),g)}p_{\mathcal{D}}(z\mid s,g)$ with normalizer $Z_{s,g}=\mathbb{E}_{p_{\mathcal{D}}}[e^{\alpha_H A^{H}}\mid s,g]$. The unique minimizer of \cref{eq:awr_gaussian} in $\mu_\theta(s,g)$ is
\begin{equation}
\mu^{\star}_\theta(s,g) \;=\; \mathbb{E}_{z\sim p^{\star}_{s,g}}[z] \;=\; \sum_{m=1}^{M(s,g)} \tilde{\pi}_m(s,g)\,\tilde{\mu}_m(s,g),
\label{eq:gaussian_closed_form}
\end{equation}
where $\tilde{\mu}_m$ and $\tilde{\pi}_m$ are the AWR-tilted modal centers and weights (full forms in \cref{app:proof_mode_averaging}). Hence $\mu^{\star}_\theta(s,g)\in\mathrm{conv}\{\tilde{\mu}_m(s,g)\}$. When the tilted weights produce $\mu^{\star}_\theta(s,g)\in\mathcal{S}_{\mathrm{wall}}$, generic for approximately balanced modes separated by the wall, the Gaussian places mass on $\mathcal{S}_{\mathrm{wall}}$ bounded below by a strictly positive constant for every $\sigma$ in any fixed compact range.
\end{theorem}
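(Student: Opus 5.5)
The argument conditions on a fixed pair $(s,g)$ and reduces the AWR loss to a weighted least-squares problem. Since $\mu_\theta$ is unconstrained, minimizing $\mathcal{L}^{\mathrm{AWR}}$ over the function $\mu_\theta$ is the same as minimizing, for almost every $(s,g)$, the conditional objective
\begin{equation*}
J_{s,g}(\mu)=\mathbb{E}_{z\sim p_{\mathcal{D}}(\cdot\mid s,g)}\Bigl[e^{\alpha_H A^{H}(s,w(z),g)}\Bigl(\tfrac{1}{2\sigma^2}\|z-\mu\|^2+\tfrac d2\log(2\pi\sigma^2)\Bigr)\Bigr].
\end{equation*}
The constant term does not depend on $\mu$. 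The weights are bounded and integrable, and I assume $p_{\mathcal{D}}$ has a finite second moment (implicit in the statement). Under these conditions $J_{s,g}(\mu)=\frac{Z_{s,g}}{2\sigma^2}\mathbb{E}_{p^\star_{s,g}}\|z-\mu\|^2+\mathrm{const}$.

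This is a strictly convex quadratic in $\mu$, because $Z_{s,g}>0$ (the weights $e^{\alpha_H A^H}$ are strictly positive). Its unique minimizer is the tilted mean $\mathbb{E}_{p^\star_{s,g}}[z]$, which we see either by setting the gradient to zero or from the bias–variance identity $\mathbb{E}\|z-\mu\|^2=\mathrm{Var}+\|\mathbb{E}z-\mu\|^2$. Uniqueness is up to $(s,g)$-null sets, which is the natural sense for the statement.

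For the mixture form, substitute \cref{eq:dataset_mixture} into the definition of $p^\star_{s,g}$. Define $Z_m=\mathbb{E}_{q_m}[e^{\alpha_H A^H}]$, the tilted weights $\tilde\pi_m=\pi_m Z_m/Z_{s,g}$, and the tilted components $\tilde q_m=e^{\alpha_H A^H}q_m/Z_m$ with centers $\tilde\mu_m=\mathbb{E}_{\tilde q_m}[z]$. Then $p^\star_{s,g}=\sum_m\tilde\pi_m\tilde q_m$ with $\tilde\pi_m\ge0$ and $\sum_m\tilde\pi_m=1$, since $Z_{s,g}=\sum_m\pi_m Z_m$. Linearity of expectation gives \cref{eq:gaussian_closed_form}, and the convex-hull inclusion follows immediately. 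I would also note that each $\tilde\mu_m$ lies in $\mathrm{conv}\,\mathrm{supp}(q_m)$, but nothing forces $\mu^\star$ into $\bigcup_m\mathrm{supp}(q_m)$.

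For the final claim, assume $\mu^\star\in\mathcal{S}_{\mathrm{wall}}$ and that $\mathcal{S}_{\mathrm{wall}}$ has nonempty interior around $\mu^\star$ (or at least positive Lebesgue measure near it). This has to be stated as an interpretation, since the statement does not define the wall's regularity. Fix a ball $B(\mu^\star,r)\subseteq\mathcal{S}_{\mathrm{wall}}$ and a compact range $\sigma\in[\sigma_{\min},\sigma_{\max}]$ with $\sigma_{\min}>0$. The Gaussian mass of the ball is $\Pr(\|\sigma\xi\|\le r)=F_{\chi^2_d}(r^2/\sigma^2)$, where $\xi\sim\mathcal N(0,I_d)$. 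This is continuous and decreasing in $\sigma$, so it is bounded below by $F_{\chi^2_d}(r^2/\sigma_{\max}^2)>0$. That gives the uniform positive constant.

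The genericity remark ("approximately balanced modes") I would justify only informally. If $\tilde\mu_1,\tilde\mu_2$ lie in different components and the wall separates them along the segment, then the continuous map $\lambda\mapsto\lambda\tilde\mu_1+(1-\lambda)\tilde\mu_2$ passes through the wall's interior for an open set of weights $\lambda$ around the crossing point.

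The main obstacle is this last part, not the closed form. The statement's notion of "$\mu^\star\in\mathcal S_{\mathrm{wall}}$" has to be upgraded to an interior condition, and "generic" has to be made precise. I would handle both by explicitly adopting the interior-point assumption and proving the open-set-of-weights version of genericity for $M=2$, then extending to general $M$ by grouping the modes into two clusters.
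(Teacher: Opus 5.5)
Your proposal is correct and follows essentially the same route as the paper's proof. Both reduce to the conditional objective, change measure to the tilted density $p^{\star}_{s,g}$ and use strict convexity to get the tilted mean. Both then decompose the mixture with $\beta_m$ (your $Z_m$), apply linearity to obtain the convex combination, and bound the Gaussian mass of a ball around $\mu^{\star}_\theta$ below by its value at $\sigma_{\max}$.

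Your explicit finite-second-moment and interior-ball assumptions are exactly what the paper uses implicitly; its Step~6 simply asserts an open ball $B_r(\mu^{\star}_\theta)\subset\mathcal{S}_{\mathrm{wall}}$. Stating them openly makes your version slightly more careful, not a different argument.
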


\begin{wrapfigure}{r}{0.58\textwidth}
  \vspace{-1.0em}
  \centering
  \includegraphics[width=\linewidth]{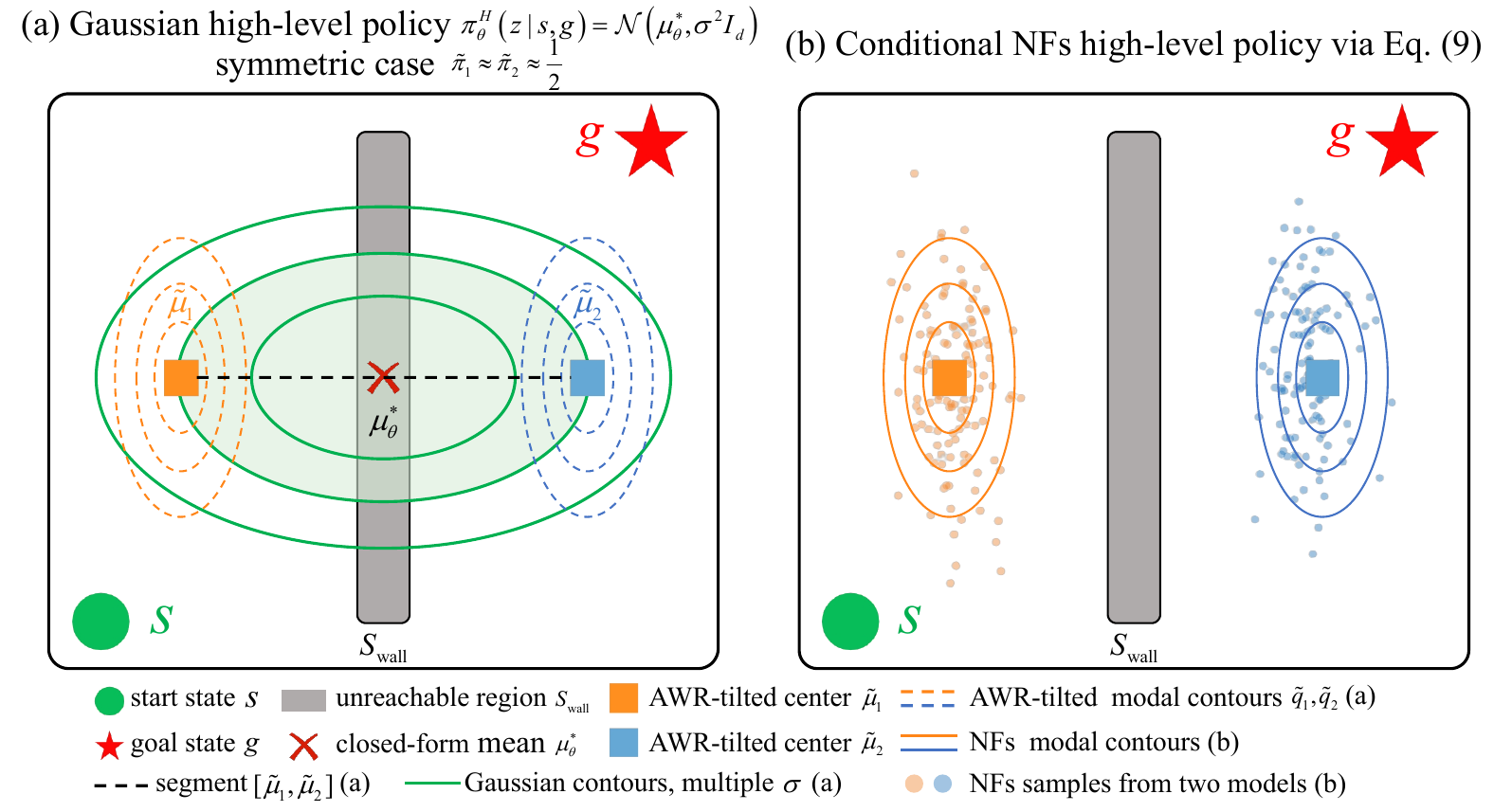}
  \vspace{-0.5em}
  \caption{\textbf{Geometric illustration of \cref{thm:mode_averaging}.} \textbf{(a)} In the symmetric two-mode case $\widetilde{\pi}_1\approx\widetilde{\pi}_2\approx\tfrac{1}{2}$, the AWR-optimal Gaussian mean $\mu^{*}_\theta$ from \cref{eq:gaussian_closed_form} sits at the midpoint of $[\widetilde{\mu}_1,\widetilde{\mu}_2]$ inside $\mathcal{S}_{\mathrm{wall}}$, and the Gaussian (green ellipses at multiple $\sigma$) leaks irreducible mass onto $\mathcal{S}_{\mathrm{wall}}$. \textbf{(b)} A conditional NF (\cref{eq:nf_logdensity}) places mass on both corridors and drives the wall mass arbitrarily small as capacity grows~\citep{papamakarios2021normalizing}.}
  \label{fig:nf_vs_gaussian}
  \vspace{-1.2em}
\end{wrapfigure}
\cref{thm:mode_averaging} states that the AWR-optimal Gaussian mean is a fixed convex combination of AWR-tilted modal centers, falls inside any region those centers straddle, and irreducibly leaks mass into that region; \cref{fig:nf_vs_gaussian} illustrates this failure on a two-corridor case alongside the corresponding NF behavior. The full proof is in \cref{app:proof_mode_averaging}.

\textbf{Design-space implication.} \cref{thm:mode_averaging} forces (R1) multi-modality with state-dependent mode count, since any unimodal class collapses on \cref{eq:dataset_mixture}. AWR training adds two further requirements: (R2) exact log-density, since the exponential weight in \cref{eq:awr_gaussian} amplifies any density bias by $e^{\alpha_H A^H_{\max}}$, and (R3) single-pass sampling, since the high-level policy is queried once per $k$-step option. Conditional NFs are the minimal class meeting all three:
\begin{equation}
\log\pi^H_\theta(z\mid s,g) \;=\; \log\mathcal{N}\bigl(f_\theta(z;\,s,g);\,0,I_d\bigr) \;+\; \log\left|\det\frac{\partial f_\theta(z;\,s,g)}{\partial z}\right|
\label{eq:nf_logdensity}
\end{equation}
gives exact density (R2), coupling flows are universal multi-modal approximators (R1)~\citep{papamakarios2021normalizing}, and one pass through $f_\theta^{-1}$ produces a sample (R3). Diffusion violates (R2) and (R3), flow matching violates (R3), and fixed-$M$ mixtures violate (R1) when $M(s,g)$ varies. \cref{fig:nf_vs_gaussian} illustrates the wall failure on a two-corridor toy case, and \cref{fig:loglik_heatmap_task1} (\cref{app:additional}) shows the same failure on real learned policies on \texttt{antmaze-teleport-navigate}, with the trained HIQL Gaussian spreading diffuse mass across walls and teleporters and the trained NFTR flow concentrating on disjoint corridors and teleport landing sites. We extend the comparison against flow matching and diffusion in \cref{app:nf_vs_other}. NF-based methods such as FQL~\citep{park2025flow} and NF-RLBC~\citep{ghugare2025normalizing} parameterize action policies and do not subsume our subgoal-selection setting.
\subsection{Triangle-Slack Composability Reweighting}
\label{sec:rwdr}

Multi-modal modeling alone does not address optimistic bias. The AWR advantage $A_{\text{env}}(s,w,g) = V(w,g) - V(s,g)$ tells us whether $w$ \emph{appears} closer to $g$ under the learned value function, but it cannot separate two qualitatively different events that both produce a positive advantage in the dataset. The first is a subgoal reached along a reliable path. The second is a subgoal reached via a lucky transition (a teleporter outcome, an observation-noise realization). Both inflate $A_{\text{env}}$, and AWR amplifies them identically through $\exp(\alpha_H A_{\text{env}})$. The root cause is that $V$ answers a sample-level question, namely ``did this subgoal lead to success in this data?'', rather than the population-level question we actually care about, namely ``is this subgoal reliably on the typical path from $s$ to $g$?''.

\textbf{Triangle-slack as a geometric consistency signal.} We introduce triangle-slack as a geometric consistency signal that identifies composable subgoals independently of value estimates. Given a learned quasimetric distance $d_\theta$ (detailed in \cref{sec:implementation}), we define the triangle-slack of subgoal $w$ for state-goal pair $(s,g)$ as
\begin{equation}
\Delta(s, w, g) \;=\; \max\big(0,\; d_\theta(s, w) + d_\theta(w, g) - d_\theta(s, g)\big).
\label{eq:triangle_slack}
\end{equation}
By the triangle inequality property of quasimetrics, $\Delta \geq 0$ always holds, with $\Delta \approx 0$ when $w$ lies on a geodesic path from $s$ to $g$ and $\Delta > 0$ when reaching $g$ via $w$ requires a detour compared to the direct path. \cref{fig:triangle_slack} illustrates this geometry: composable waypoints incur near-zero slack while non-composable ones produce strictly positive slack indicating a detour.

\begin{wrapfigure}[22]{r}{0.45\linewidth}
    \vspace{-0.5em}
    \centering
	\centerline{\includegraphics[width=0.4\textwidth]{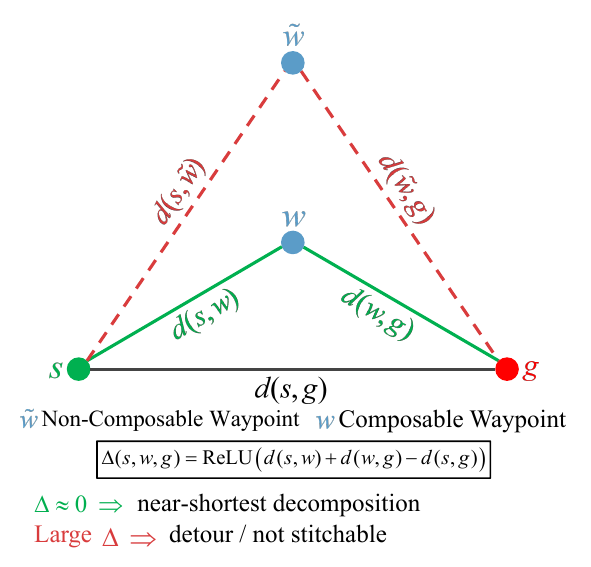}}
    \caption{\textbf{Triangle-slack geometry.} A composable waypoint $w$ approximately satisfies the triangle equality, i.e., $d(s,w)+d(w,g)\approx d(s,g)$, yielding a small triangle slack $\Delta(s,w,g)\approx 0$ (green edges). In contrast, a non-composable waypoint $\tilde{w}$ violates this relation, producing a large slack $\Delta(s,\tilde{w},g)>0$ (red dashed edges), which indicates a detour / poor composability and should be downweighted by RWDR.}
    \label{fig:triangle_slack}
    \vspace{-1.6em}
\end{wrapfigure}
We now make these geometric properties precise. We first formalize the notion of a geodesic subgoal:
\begin{definition}[Geodesic Subgoal]
\label{def:geodesic}
A subgoal $w$ is \textbf{geodesic} for $(s, g)$ under quasimetric $d$ if $d(s, g) = d(s, w) + d(w, g)$.
\end{definition}
The following proposition characterizes the four key properties of triangle-slack as a composability signal.
\begin{proposition}[Slack Characterization]
\label{prop:slack}
For any quasimetric $d$ satisfying \cref{def:quasimetric} and any subgoal $w \in \mathcal{S}$:
\begin{enumerate}
    \item $\Delta(s, w, g) \geq 0$ for all $s, w, g \in \mathcal{S}$.
    \item $\Delta(s, w, g) = 0$ if and only if $w$ is geodesic for $(s, g)$.
    \item $\Delta(s, w, g) > 0$ if and only if reaching $g$ via $w$ requires a strict detour, i.e., $d(s,w) + d(w,g) > d(s,g)$.
    \item In a deterministic MDP, under the optimal distance $d^*(s,g) = -\log V^*(s,g)$, geodesic subgoals are exactly those on optimal paths. In a general (stochastic) MDP, the multiplicative value decomposition relaxes to $V^*(s,g) \geq V^*(s,w) \cdot V^*(w,g)$, so $d^*(s,g) \leq d^*(s,w) + d^*(w,g)$ becomes a (strict) inequality for off-path $w$, and triangle-slack acts as a \emph{conservative upper bound} on composability violation.
\end{enumerate}
\end{proposition}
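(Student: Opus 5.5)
Items~1--3 of \cref{prop:slack} follow directly from \cref{def:quasimetric} and the definition of $\Delta$ in \cref{eq:triangle_slack}; item~4 needs one Bellman-type argument on $V^*$. Throughout, write $u(s,w,g)\coloneqq d(s,w)+d(w,g)-d(s,g)$, so that $\Delta=\max(0,u)$.

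\textbf{Items 1--3.} Item~1 holds because $\Delta$ is a maximum with $0$. Property~(3) of \cref{def:quasimetric} (the triangle inequality) additionally gives $u\ge 0$, so in fact $\Delta=u$ exactly: the ReLU never clips for a true quasimetric. Consequently $\Delta=0$ iff $u=0$, which is precisely the geodesic condition of \cref{def:geodesic}; this is item~2. Since $u\ge 0$, the only alternative to $u=0$ is $u>0$. Therefore $\Delta>0$ iff $d(s,w)+d(w,g)>d(s,g)$, which is item~3.

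\textbf{Item 4, multiplicative bound.} I first show $V^*(s,g)\ge V^*(s,w)\,V^*(w,g)$ in any MDP. I read $V^*$ as $\mathbb{E}[\gamma^{T_g}]$ for the first hitting time $T_g$, which is the natural reading of $-\log V^*$ as a distance. The lower bound comes from a composed, possibly non-Markov policy: run the policy optimal for reaching $w$ until $T_w$, then switch to the policy optimal for reaching $g$. By the strong Markov property at $T_w$, and since $T_g\le T_w+T'_g$, this policy achieves value at least $\mathbb{E}[\gamma^{T_w}]\,V^*(w,g)=V^*(s,w)V^*(w,g)$. Optimality of $V^*(s,g)$ over history-dependent policies then gives the inequality; a Markov optimal policy exists for this stopping problem, so $V^*$ is the same over either class. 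Taking $-\log$ yields $d^*(s,g)\le d^*(s,w)+d^*(w,g)$. The remaining axioms hold as well: $d^*(s,s)=0$ because $V^*(s,s)=1$, and $d^*(s,g)>0$ for $s\ne g$ because $\gamma<1$ and at least one step is needed. Hence $d^*$ is a quasimetric, and item~4 reduces to identifying when equality holds.

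\textbf{Item 4, deterministic case.} In a deterministic MDP, $V^*(s,g)=\gamma^{D(s,g)}$ where $D$ is the shortest-path step count, so $d^*=D\log(1/\gamma)$. The equality $D(s,g)=D(s,w)+D(w,g)$ means concatenating a shortest path $s\to w$ with a shortest path $w\to g$ gives a shortest path $s\to g$, i.e.\ $w$ lies on an optimal path. Conversely, if $w$ lies on an optimal path, splitting that path at $w$ gives two segments whose lengths must each be minimal, since otherwise the whole path could be shortened. Thus the geodesic subgoals are exactly the on-path states.

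\textbf{Item 4, stochastic case (main obstacle).} The statement is partly interpretive, so I would make it precise as follows. For off-path $w$ the composed policy is strictly suboptimal, since forcing passage through $w$ loses value relative to the optimal policy. This gives strict inequality and hence $\Delta>0$. Moreover $\Delta$ itself equals $\log\bigl(V^*(s,g)/(V^*(s,w)V^*(w,g))\bigr)$, the exact log-ratio by which forced passage through $w$ falls short of the optimum; reading that shortfall as composability violation makes $\Delta$ a conservative upper bound on it. Strictness is the subtle step: I would establish it under a mild assumption that every optimal policy from $s$ reaches $g$ without visiting $w$ before $g$ with positive probability, and otherwise state only the weak inequality.
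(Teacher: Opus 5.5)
Your items~1--3, the bound $V^*(s,g)\ge V^*(s,w)V^*(w,g)$ via concatenation at $T_w$, and the deterministic case follow the paper's proof, and you are more careful in three places. First, your explicit first-hitting reading $V^*=\mathbb{E}[\gamma^{T_g}]$ is what actually gives $V^*\le 1$ and $d^*(s,s)=0$; the occupancy form of \cref{eq:value_function} does not guarantee this. Second, your $T_g\le T_w+T'_g$ is correct where the paper writes the concatenated value as an exact product. That product is false when the concatenated policy hits $g$ before $w$, although only ``$\ge$'' is needed. Third, you justify the sub-path optimality that the paper's step $\gamma^{k_1}=V^*(s,w)$ silently assumes.

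The genuine divergence is the stochastic equality analysis. The paper argues that equality can persist for off-path $w$ (``lucky transitions'' inflating $V^*(s,w)$, although $V^*$ is already an expectation) and keeps only $\Delta^*>0\Rightarrow w$ off-path. You prove the reverse direction, off-path $\Rightarrow$ strict inequality, which is what the statement actually asserts. You also make ``conservative upper bound'' precise by reading $\Delta^*$ as the log-shortfall of forced passage through $w$. For that to be exact, add the one-line converse: any policy passing through $w$ before $g$ earns at most $V^*(s,w)V^*(w,g)$, by conditioning at $T_w$.

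Your hedge is unnecessary. If $\Delta^*=0$, the composed policy is optimal. Since $\gamma^{T_g}>\gamma^{T_w+T'_g}$ pointwise on $\{T_g<T_w\}$, equality forces it to visit $w$ before $g$ almost surely whenever it reaches $g$. Together with the converse above, $\Delta^*(s,w,g)=0$ iff some (possibly history-dependent) optimal policy passes through $w$ almost surely, which is the exact analogue of the deterministic characterization. Your hypothesis is precisely the negation of this, so it is the definition of off-path rather than an extra assumption. It must be read over history-dependent policies, since you apply it to the composed one.

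``Almost surely'' is the right notion. Suppose the only action at $s$ moves to $w_1$ or $w_2$ with probability $\tfrac12$ each, both are one step from $g$, and $w_1$ is unreachable from $w_2$ and $g$. Then $\Delta^*(s,w_1,g)=\log 2>0$, although the optimal policy visits $w_1$ half the time. So under a positive-probability reading of ``on-path,'' it is the paper's retained direction that fails. The paper's route buys the narrative bridge to a learned $d_\theta$ reflecting average reachability; yours is the one that actually proves the stated strictness.
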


The proof is in \cref{app:proof_slack}. Parts (1) to (3) say triangle-slack vanishes exactly on geodesics, and part (4) says it remains a conservative upper bound on composability violation under stochastic dynamics. The link to lucky-transition filtering is mechanistic. Because $d_\theta$ reflects average reachability, lucky subgoals incur inflated detour costs and therefore elevated slack, so low slack is a necessary condition for composability and elevated slack is a strong down-weighting signal.

\textbf{The RWDR objective.} Building on \cref{eq:triangle_slack}, we combine the value advantage and triangle-slack penalty into the RWDR weighting scheme:
\begin{equation}
\log w(s, w, g) \;=\; \alpha_H \cdot A_{\text{env}}(s, w, g) - \kappa \cdot \Delta(s, w, g),
\label{eq:rwdr_weight}
\end{equation}
where $\kappa > 0$ controls the strength of the geometric penalty. The weight $w(s,w,g)$ is high when the subgoal has both high value advantage (making progress toward the goal) and low triangle-slack (geometrically consistent with being on-path). When the two signals conflict---high value but large slack, indicating a lucky transition---RWDR downweights the subgoal, preventing the policy from learning to propose unreliable waypoints. The final high-level policy loss becomes:
\begin{equation}
\mathcal{L}_{\pi^H} \;=\; -\mathbb{E}_{(s,w,g)\sim\mathcal{D}}\left[w(s, w, g) \cdot \log \pi^H_{\text{NF}}(\phi([s;w]) \mid s, g)\right],
\label{eq:high_loss}
\end{equation}
where we apply stop-gradient to both $V_\theta$ and $d_\theta$ when computing weights, treating them as fixed scorers for policy extraction. This separation ensures stable training by preventing gradients from the policy loss from interfering with value and distance learning.

The reweighting in \cref{eq:rwdr_weight} can be equivalently expressed as a multiplicative correction to the AWR weights, which we record as a corollary for later use:

\begin{assumption}[Geometric Consistency]
\label{ass:geometric}
The distance function $d_\theta$ satisfies the triangle inequality exactly:
\begin{equation}
d_\theta(s,g) \leq d_\theta(s,w) + d_\theta(w,g), \quad \forall s,w,g \in \mathcal{S}.
\end{equation}
This property is guaranteed by the MRN architecture and holds regardless of training.
\end{assumption}

\begin{corollary}[Geometric Regularization Effect]
\label{cor:geometric_reg}
Under \cref{ass:geometric}, for any distance function $d_\theta$ satisfying the triangle inequality, RWDR reduces the weight of subgoals with non-zero slack:
\begin{equation}
w_{\text{RWDR}}(s,w,g) \;=\; w_{\text{HIQL}}(s,w,g) \cdot \exp\big(-\kappa\, \Delta(s,w,g)\big).
\end{equation}
For any subgoal with $\Delta(s,w,g) > 0$, we have $w_{\text{RWDR}}(s,w,g) < w_{\text{HIQL}}(s,w,g)$.
\end{corollary}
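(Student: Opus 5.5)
The plan is to exponentiate the RWDR log-weight in \cref{eq:rwdr_weight} and compare the result with HIQL's weight from \cref{eq:hiql_high}. We identify $w_{\text{HIQL}}(s,w,g)=\exp(\alpha_H A_{\text{env}}(s,w,g))$, which is the weight multiplying $\log\pi^H$ in HIQL's AWR loss. Taking the exponential of \cref{eq:rwdr_weight} gives
\begin{equation*}
w_{\text{RWDR}}(s,w,g)=\exp\bigl(\alpha_H A_{\text{env}}(s,w,g)-\kappa\,\Delta(s,w,g)\bigr)=\exp\bigl(\alpha_H A_{\text{env}}(s,w,g)\bigr)\cdot\exp\bigl(-\kappa\,\Delta(s,w,g)\bigr).
\end{equation*}
This uses only the homomorphism $e^{a+b}=e^ae^b$, and it yields the claimed multiplicative identity. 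Both weights are computed under stop-gradient with the same $V_\theta$ and the same $A_{\text{env}}$, so the two factors are unambiguous.

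Next we check that the correction factor is well defined and lies in $(0,1]$. By \cref{ass:geometric} and part (1) of \cref{prop:slack}, $\Delta(s,w,g)\ge 0$ everywhere. The outer $\max(0,\cdot)$ in \cref{eq:triangle_slack} already guarantees nonnegativity; the assumption adds that the ReLU is inactive, so $\Delta$ equals the raw detour $d_\theta(s,w)+d_\theta(w,g)-d_\theta(s,g)$. Since $\kappa>0$, this gives $\exp(-\kappa\Delta)\in(0,1]$, so RWDR never increases a weight.

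For the strict inequality, suppose $\Delta(s,w,g)>0$. Then $\kappa\Delta>0$, so $\exp(-\kappa\Delta)<1$. Because $w_{\text{HIQL}}=\exp(\alpha_H A_{\text{env}})>0$ is finite, multiplying it by a factor strictly less than one strictly decreases it, which gives $w_{\text{RWDR}}<w_{\text{HIQL}}$.

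The only point needing care is weight clipping, which \cref{alg:training} applies and \cref{thm:mode_averaging} references. If both weights are clipped at a common cap, a subgoal whose unclipped HIQL weight far exceeds the cap could have both clipped weights equal to the cap, and the strict inequality would fail. I would therefore state the corollary for the unclipped weights, as written. Alternatively, I would note that strictness persists whenever $w_{\text{RWDR}}$ is below the cap, because clipping is monotone and so the inequality remains non-strict in general. Apart from this caveat, the argument is an elementary computation.
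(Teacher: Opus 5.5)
Your proposal is correct and matches the paper's proof: both factor $\exp(\alpha_H A_{\text{env}}-\kappa\Delta)$ into $w_{\text{HIQL}}\cdot\exp(-\kappa\Delta)$, then use $\Delta\geq 0$ and $\kappa>0$ to place the factor in $(0,1]$, strictly below one when $\Delta>0$. Your clipping caveat goes beyond the paper, which does not discuss clipping and so implicitly states the corollary for unclipped weights; the caveat is sound.
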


The proof is provided in \cref{app:proof_geometric_reg}. \cref{cor:geometric_reg} formalizes the intuition that RWDR exponentially suppresses lucky subgoals through their geometric inconsistency, with the suppression factor controlled by $\kappa$. The local properties established here---geometric definition, slack characterization, and weight-level suppression---fully describe RWDR as a component. The system-level consequences of combining the NF high-level policy from \cref{sec:nf_policy} with the RWDR objective in \cref{eq:high_loss} are analyzed in the next section.

\subsection{Theoretical Analysis of NFTR}
\label{sec:theory}
\cref{sec:nf_policy,sec:rwdr} established two component-level results: NF parameterization avoids mode-averaging at the high-level policy, and triangle-slack RWDR functions as a geometric composability filter (\cref{prop:slack,cor:geometric_reg}). We now combine both into system-level guarantees, under two regularity assumptions.

\begin{assumption}[Single-Policy Concentrability]
\label{ass:coverage}
There exists $C_\pi<\infty$ such that $d^{\pi^{H,*}}(s,w)/d^\beta(s,w)\leq C_\pi$ for all $(s,w)$ in the support of the optimal high-level policy $\pi^{H,*}$, where $d^\pi$ is the state-subgoal visitation under $\pi$ and $d^\beta$ is the dataset distribution.
\end{assumption}
\begin{assumption}[Value Estimation Quality]
\label{ass:value}
The learned value satisfies $\mathbb{E}_{(s,g)\sim\mathcal{D}}[(V_\theta(s,g)-V^*(s,g))^2]\leq\epsilon_V$.
\end{assumption}
Combined with \cref{ass:geometric}, these yield the following decomposition.

\begin{theorem}[RWDR Suboptimality Decomposition]
\label{thm:rwdr}
Assume $|A_{\mathrm{env}}|\leq A_{\max}$ and $d_\theta\leq D_{\max}$ (so $\Delta\leq 2D_{\max}$). Under \cref{ass:coverage,ass:geometric,ass:value}, the population minimizer $\hat{\pi}^H$ of the empirical RWDR objective (\cref{eq:high_loss}) with $n$ samples satisfies, with high probability,
\begin{equation}
J(\pi^{H,*}) - J(\hat{\pi}^H) \;\leq\; \underbrace{\mathcal{O}(\sqrt{\epsilon_V})}_{\text{value error}} \;+\; \underbrace{\mathcal{O}\bigl(\sqrt{C_\pi\,\mathcal{C}(\Pi)/n}\bigr)}_{\text{sampling error}} \;+\; \underbrace{R_\kappa(d_\theta)}_{\text{regularization}},
\end{equation}
where $R_\kappa(d_\theta)\coloneqq\kappa A_{\max}\,\mathbb{E}_{(s,w,g)\sim\pi^{H,*}}[\Delta(s,w,g)]\leq 2\kappa A_{\max}D_{\max}$ is the residual bias of the geometric reweighting on optimal subgoals, and $\mathcal{C}(\Pi)$ is the policy-class complexity ($\log|\Pi|$ for finite $\Pi$, replaced by Rademacher or covering-number bounds for continuous parameterizations such as RealNVP).
\end{theorem}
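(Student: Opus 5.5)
The plan is to split $J(\pi^{H,*})-J(\hat{\pi}^H)$ by introducing two intermediate policies: $\bar{\pi}^H$, the population minimizer of the RWDR objective built from the \emph{true} advantage $A^*(s,w,g)=V^*(w,g)-V^*(s,g)$, and $\pi^H_0$, the population minimizer of plain AWR with $A^*$ and $\kappa=0$. In the nonparametric limit, $\pi^H_0$ is the tilted distribution $\propto \beta\, e^{\alpha_H A^*}$, the same tilt as $p^\star_{s,g}$ in \cref{thm:mode_averaging}. We take the AWR monotone-improvement property (stated in the abstract) to give $J(\pi^H_0)\geq J(\beta)$, and we take $\pi^H_0$ to approximate $\pi^{H,*}$ on the support covered by \cref{ass:coverage}. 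The telescoping is
\begin{equation}
J(\pi^{H,*})-J(\hat{\pi}^H)=\bigl[J(\pi^{H,*})-J(\bar{\pi}^H)\bigr]+\bigl[J(\bar{\pi}^H)-J(\tilde{\pi}^H)\bigr]+\bigl[J(\tilde{\pi}^H)-J(\hat{\pi}^H)\bigr],
\end{equation}
where $\tilde{\pi}^H$ is the population RWDR minimizer under $V_\theta$. The three brackets are, respectively, the regularization, value, and sampling terms.

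\textbf{Value term.} Both $\bar{\pi}^H$ and $\tilde{\pi}^H$ are Gibbs-tilted versions of $\beta$ with the same slack factor $e^{-\kappa\Delta}$ from \cref{cor:geometric_reg}. Their log-weights differ only by $\alpha_H(A_{\mathrm{env}}-A^*)$, and $|A_{\mathrm{env}}-A^*|\leq |V_\theta(w,g)-V^*(w,g)|+|V_\theta(s,g)-V^*(s,g)|$. I will bound the total variation between the two tilted distributions via Pinsker and the Donsker--Varadhan form of the KL between Gibbs measures. That gives a bound by $\alpha_H\,\mathbb{E}|A_{\mathrm{env}}-A^*|\lesssim \alpha_H\sqrt{\epsilon_V}$, using Jensen/Cauchy--Schwarz and \cref{ass:value}. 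To transfer this to the state-subgoal pairs the optimal policy visits, I use \cref{ass:coverage} to change measure; this costs a factor depending on $C_\pi$, which is absorbed in $\mathcal{O}(\cdot)$. The boundedness $e^{\alpha_H A}\leq e^{\alpha_H A_{\max}}$ together with $\Delta\leq 2D_{\max}$ keeps the normalizers bounded away from zero. A performance-difference lemma at the high level then converts the TV distance into a return gap, with returns lying in $[0,1/(1-\gamma)]$.

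\textbf{Sampling term.} Since the weights lie in $[e^{-\alpha_H A_{\max}-2\kappa D_{\max}},\,e^{\alpha_H A_{\max}}]$, the weighted log-likelihood loss is bounded whenever the log-density is clipped (I will assume this for RealNVP). Standard uniform convergence over $\Pi$ — a union bound for finite $\Pi$, or Rademacher complexity otherwise — gives an excess weighted-risk bound of order $\sqrt{\mathcal{C}(\Pi)/n}$. This excess risk is a weighted KL gap. Pinsker converts it to TV, the concentrability ratio $C_\pi$ handles the distribution shift, and the performance-difference lemma again converts TV into a return gap. This yields $\sqrt{C_\pi\mathcal{C}(\Pi)/n}$.

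\textbf{Regularization term (main obstacle).} Here I must compare $\bar{\pi}^H$ with $\pi^{H,*}$ and show the slack factor costs at most $\kappa A_{\max}\mathbb{E}_{\pi^{H,*}}[\Delta]$. First I would write $\bar{\pi}^H$ as the maximizer of $\mathbb{E}_\pi[\alpha_H A^*-\kappa\Delta]-\mathrm{KL}(\pi\|\beta)$. Comparing against $\pi^{H,*}$ (or against its AWR proxy $\pi^H_0$) as a feasible competitor, the extra penalty paid relative to the unregularized problem is $\kappa\,\mathbb{E}_{\pi^{H,*}}[\Delta]$ in log-weight units. Rescaling by the advantage range to convert this into value gives the factor $A_{\max}$. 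The delicate point is that this argument controls the regularized objective, not $J$ itself. The conversion therefore requires identifying $J$-improvement with expected advantage under $\pi^{H,*}$, which is exactly the population monotone-improvement property, and I will invoke that property here. Finally, $\Delta\leq 2D_{\max}$ gives the uniform bound $2\kappa A_{\max}D_{\max}$. By \cref{prop:slack}(4), the term vanishes when $d_\theta=d^*$ in a deterministic MDP, since optimal subgoals are geodesic there; this serves as a consistency check.
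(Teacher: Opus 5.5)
\textbf{There is a genuine gap in your first bracket.} The term $J(\pi^{H,*})-J(\bar{\pi}^H)$ compares, in true return, the optimal policy with a finite-temperature tilt of $\beta$. That comparison carries the intrinsic AWR bias, which has nothing to do with the slack. Take $\kappa=0$. Then $R_\kappa(d_\theta)=0$, while $\bar{\pi}^H=\pi^H_0\propto\beta\,e^{\alpha_H A^*}$ has the same support as $\beta$. So whenever $\beta$ puts mass on a strictly suboptimal subgoal at a state $\pi^H_0$ reaches, the bracket is strictly positive. No argument can bound it by $R_\kappa(d_\theta)$, and ``taking $\pi^H_0$ to approximate $\pi^{H,*}$'' is an extra assumption that \cref{ass:coverage} does not provide. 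Your own variational step shows this. Since $\bar{\pi}^H$ maximizes $\mathbb{E}_\pi[\alpha_H A^*-\kappa\Delta]-\mathrm{KL}(\pi\,\|\,\beta)$, comparing with $\pi^{H,*}$ gives, for each $(s,g)$,
\[
\mathbb{E}_{\pi^{H,*}}[A^*]-\mathbb{E}_{\bar{\pi}^H}[A^*]\;\leq\;\frac{\kappa}{\alpha_H}\,\mathbb{E}_{\pi^{H,*}}[\Delta]+\frac{1}{\alpha_H}\,\mathrm{KL}\bigl(\pi^{H,*}\,\|\,\beta\bigr).
\]
Two consequences follow. First, the slack enters with coefficient $\kappa/\alpha_H$, not $\kappa A_{\max}$; nothing in your sketch derives the ``rescaling by the advantage range''. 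Second, the term $\alpha_H^{-1}\mathrm{KL}(\pi^{H,*}\|\beta)$ survives $\kappa=0$, $\epsilon_V=0$, $n\to\infty$. Turning an expected-advantage gap into a $J$ gap would also need a performance-difference identity, and $A_{\mathrm{env}}=V(w,g)-V(s,g)$ is not a Bellman advantage. \cref{prop:monotonic} only says that $\mathrm{KL}(\pi^{H,*}_{\mathcal{D}}\,\|\,\cdot)$ toward the reweighted target does not increase. That says nothing about $J$, and it does not give $J(\pi^H_0)\geq J(\beta)$ either.

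\textbf{Your other two brackets also fall short as written.} In the sampling bracket, uniform convergence bounds the excess weighted log-loss by $\mathcal{O}(\sqrt{\mathcal{C}(\Pi)/n})$. That excess equals a weighted $\mathrm{KL}(\tilde{\pi}^H\|\hat{\pi}^H)$ only if the tilted target lies in $\Pi$. Even then, Pinsker gives a total-variation bound of order $(\mathcal{C}(\Pi)/n)^{1/4}$, not $\sqrt{\mathcal{C}(\Pi)/n}$, unless you add a fast-rate log-loss argument. In both the value and sampling brackets, a performance-difference identity between two of $\bar{\pi}^H,\tilde{\pi}^H,\hat{\pi}^H$ is an expectation under the visitation of one of those two policies. \cref{ass:coverage} controls only $d^{\pi^{H,*}}/d^\beta$, so $C_\pi$ does not license your change of measure.

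\textbf{How the paper's route differs.} The paper never compares two distinct policies in true return. It telescopes through the surrogate $J_{\mathrm{RWDR}}$ evaluated at $\pi^{H,*}$ and $\hat{\pi}^H$. It bounds the surrogate gap directly as an importance-weighted ERM deviation, with no Pinsker step. It obtains the regularization as a same-policy discrepancy:
\[
|J_{\mathrm{RWDR}}(\pi^{H,*})-J(\pi^{H,*})|\leq\mathbb{E}_{\pi^{H,*}}[|A_{\mathrm{env}}|(1-e^{-\kappa\Delta})]\leq\kappa A_{\max}\,\mathbb{E}_{\pi^{H,*}}[\Delta],
\]
using $1-e^{-x}\leq x$. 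That is exactly where the factor $\kappa A_{\max}$ comes from.

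That route has its own gaps. It reaches the stated form only by positing $J_{\mathrm{RWDR}}(\pi)-J(\pi)=\widetilde{A}_{\mathrm{RWDR}}(\pi)-\widetilde{A}(\pi)$ and treating the weighted-MLE solution as a near-maximizer of that surrogate. It also never bounds its transfer term at $\hat{\pi}^H$. Read with $\pi^{H,*}$ as the true optimal high-level policy, your $\kappa=0$ example shows the stated bound cannot hold for the actual weighted-MLE solution at finite $\alpha_H$ without an extra temperature-bias term. So this obstruction is not an artifact of your decomposition.
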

The proof is in \cref{app:proof_rwdr}. The three terms are IQL value-estimation error, the standard $\mathcal{O}(1/\sqrt{n})$ sampling error, and a geometric regularization that vanishes when $d_\theta$ aligns with optimal distances (since $\Delta=0$ on geodesics by \cref{prop:slack}\,(2)) and is otherwise bounded independently of training, matching the empirical observation that trained and untrained $d_\theta$ perform comparably (\cref{sec:experiments}). Together with \cref{thm:mode_averaging}, this characterizes how NFTR's two components combine: the NF makes $\hat\pi^H$ realizable, RWDR controls which subgoals dominate the minimization via $R_\kappa(d_\theta)$.

\begin{proposition}[Population-Level Monotonic Improvement]
\label{prop:monotonic}
Assuming exact weighted-MLE optimization over $\Pi$, the RWDR objective inherits AWR's monotonic improvement: for any $\pi^H_{\mathrm{old}}\in\Pi$ and the exact weighted-MLE solution $\pi^H_{\mathrm{new}}$,
\begin{equation}
\mathrm{KL}(\pi^{H,*}_{\mathcal{D}}\,\|\,\pi^H_{\mathrm{new}}) \;\leq\; \mathrm{KL}(\pi^{H,*}_{\mathcal{D}}\,\|\,\pi^H_{\mathrm{old}}),
\end{equation}
where $\pi^{H,*}_{\mathcal{D}}$ is the RWDR-weighted in-distribution optimum.
\end{proposition}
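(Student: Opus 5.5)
The plan is to reduce the weighted-MLE step to an M-projection onto $\Pi$ of a fixed target distribution, and then compare the new and old policies through that projection. For each $(s,g)$ in the support of $\mathcal{D}$ I will introduce the RWDR-tilted target
\begin{equation}
\pi^{H,*}_{\mathcal{D}}(z\mid s,g) \;\coloneqq\; \frac{w(s,w(z),g)\,p_{\mathcal{D}}(z\mid s,g)}{Z^{\mathrm{R}}_{s,g}}, \qquad Z^{\mathrm{R}}_{s,g}=\mathbb{E}_{p_{\mathcal{D}}}\bigl[e^{\alpha_H A_{\mathrm{env}}-\kappa\Delta}\mid s,g\bigr].
\end{equation}
The normalizer $Z^{\mathrm{R}}_{s,g}$ is finite and strictly positive. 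Finiteness follows from $|A_{\mathrm{env}}|\le A_{\max}$ (or from the weight clipping of \cref{alg:training}). Positivity follows from $0\le\Delta\le 2D_{\max}$ (nonnegativity being \cref{prop:slack}(1)). Moreover, by \cref{cor:geometric_reg} the RWDR weight equals the HIQL weight times $e^{-\kappa\Delta}\in(0,1]$, so this target is exactly the AWR-tilted density $p^\star_{s,g}$ of \cref{thm:mode_averaging}, further tilted by a bounded positive factor. Hence it is a well-defined probability density. Because $V_\theta$ and $d_\theta$ are held under stop-gradient, this target is fixed and does not depend on $\pi^H$. I take the KL in the statement to be averaged over $(s,g)\sim\mathcal{D}$, with each $(s,g)$ weighted by $Z^{\mathrm{R}}_{s,g}$; if the intended weighting is uniform, I would instead state the result per $(s,g)$, which holds under pointwise optimization.

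The key identity comes next. Substituting $w\,p_{\mathcal{D}}=Z^{\mathrm{R}}\,\pi^{H,*}_{\mathcal{D}}$ into \cref{eq:high_loss} gives, for any $\pi\in\Pi$,
\begin{equation}
\mathcal{L}_{\pi^H}(\pi) \;=\; \mathbb{E}_{(s,g)}\Bigl[Z^{\mathrm{R}}_{s,g}\bigl(\mathrm{KL}(\pi^{H,*}_{\mathcal{D}}\,\|\,\pi)+\mathcal{H}(\pi^{H,*}_{\mathcal{D}})\bigr)\Bigr].
\end{equation}
The entropy term does not depend on $\pi$. Minimizing the weighted-MLE loss is therefore equivalent to minimizing the ($Z^{\mathrm{R}}$-weighted) forward KL to the fixed target over $\Pi$.

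The monotonicity then follows from optimality. Under the exact-optimization hypothesis, $\pi^H_{\mathrm{new}}$ is the minimizer over $\Pi$. Since $\pi^H_{\mathrm{old}}\in\Pi$ is feasible, $\mathcal{L}(\pi^H_{\mathrm{new}})\le\mathcal{L}(\pi^H_{\mathrm{old}})$. Cancelling the common entropy term yields the claimed KL inequality. If $\Pi$ contains the target, as NFs approximately do by (R1) and unlike the Gaussian class of \cref{thm:mode_averaging}, the new KL is zero.

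The main obstacle is not the inequality itself but making the identity above rigorous. This requires three things: the entropy $\mathcal{H}(\pi^{H,*}_{\mathcal{D}})$ must be finite, which I would secure by assuming $p_{\mathcal{D}}(\cdot\mid s,g)$ has finite differential entropy and using the bounded tilt; the change of variables $z=\phi([s;w])$ must be handled via the deterministic preimage $w(z)$, as in \cref{thm:mode_averaging}; and the $(s,g)$-weighting convention must be stated explicitly.
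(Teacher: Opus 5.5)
Your proposal is correct and follows the same route as the paper. Like the paper, you identify the RWDR-tilted target $\pi^{H,*}_{\mathcal{D}}$, rewrite the weighted NLL as the forward KL to that fixed target plus a $\pi$-independent entropy term, and conclude from optimality of the exact minimizer against the feasible $\pi^H_{\mathrm{old}}\in\Pi$. Your version is slightly more careful than the paper's: the paper uses an empirical Dirac target, whose entropy term is not finite, and it silently drops the $Z^{\mathrm{R}}_{s,g}$ weighting across $(s,g)$ that you make explicit.
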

The proof is in \cref{app:proof_monotonic}. Like all AWR-based methods (HIQL, SAW, OTA), \cref{prop:monotonic} does not transfer verbatim to SGD on a non-convex RealNVP parameterization, a gap that originates from $\pi^H$ rather than from RWDR; empirically we observe stable, near-monotonically decreasing training loss across all tasks and seeds (\cref{app:training-curves}).

\subsection{NFTR Algorithm}
\label{sec:implementation}
NFTR alternates per step an IQL value update, an MRN distance update, and a weighted-MLE update of $\pi^H_{\mathrm{NF}}$ together with the low-level AWR actor. The high-level update uses the RWDR weight from \cref{eq:rwdr_weight}, combining $A_{\mathrm{env}}$ with the triangle slack $\Delta$ from the architecturally inequality-preserving $d_\theta$. We parameterize $d_\theta$ with an MRN~\citep{liu2023metric} trained by a contrastive plus Bellman-consistency loss, and $\pi^H_{\mathrm{NF}}$ with $K{=}4$ RealNVP coupling layers. The total objective is
\begin{equation}
\mathcal{L} \;=\; \mathcal{L}_V + \mathcal{L}_{\pi^L} + \mathcal{L}_{\pi^H} + \lambda_d \mathcal{L}_d,
\label{eq:total_loss}
\end{equation}
with $\mathcal{L}_V$ the IQL expectile loss (\cref{eq:value_loss}), $\mathcal{L}_{\pi^L}$ the low-level AWR loss, $\mathcal{L}_{\pi^H}$ the RWDR-weighted NF loss (\cref{eq:high_loss}), and $\mathcal{L}_d$ the auxiliary MRN distance loss. For numerical stability we apply slack clipping $\Delta\leftarrow\min(\Delta,\Delta_{\max})$, weight clipping $w\leftarrow\min(w,100)$, and per-batch normalization. The RWDR coefficient $\kappa$ is tuned per task on a held-out validation split, with $\kappa\in\{0,0.5\}$ on deterministic stitching, $\kappa\in\{1.0,2.0\}$ on teleport and noisy manipulation, and the full sweep reported in \cref{tab:kappa_full}; main-table results use these per-task optima. Full pseudocode and hyperparameters are in \cref{app:algorithm,app:hyperparams}.


\section{Experiments}
\label{sec:experiments}

We evaluate NFTR on OGBench~\citep{ogbench_park2025}, focusing on three scenarios where HIQL's limitations are most pronounced: (1) stochastic environments with random transitions, (2) stitching tasks requiring trajectory composition, and (3) manipulation tasks with complex dynamics.

\subsection{Experimental Setup}
We evaluate on OGBench's locomotion (PointMaze, AntMaze) and manipulation (Cube, Scene) suites, covering the task categories \texttt{navigate} (standard goal-reaching), \texttt{stitch} (short trajectory segments that must be composed), \texttt{teleport} (stochastic dynamics with random teleportation), \texttt{explore} (low-quality exploratory data), and \texttt{play}/\texttt{noisy} (manipulation under varying observation noise). We compare against hierarchical methods HIQL~\citep{park2023hiql}, SAW~\citep{zhou2025flattening}, OTA~\citep{ahn2025option}, flat value- and distance-based methods QRL~\citep{wang2023optimal}, CRL~\citep{eysenbach2022contrastive}, TMD~\citep{myers2025offline}, flow-based methods GCFQL and H-GCFQL~\citep{park2025flow}, and NF-based methods GC-NF-RLBC and H-GC-NF-RLBC~\citep{ghugare2025normalizing}. All success rates (\%) are averaged over 5 test-time goals and 4 seeds.

\subsection{Main Results}

\begin{table*}[t]
\caption{\textbf{OGBench Locomotion Results.} Average success rate (\%) on stochastic and stitching tasks. NFTR achieves the best or second-best result on all reported tasks, with the largest gains on teleport and medium-stitch settings where optimistic bias and multi-modal subgoals co-occur. NFTR significantly outperforms HIQL on teleport and stitch tasks, validating our approach for handling optimistic bias and multi-modal subgoals. Per row, the best result is \sotabest{orange bold} and the second-best is \sotasecond{underlined}.}
\label{tab:locomotion}
\centering
\small
\renewcommand{\arraystretch}{0.95}
\resizebox{\textwidth}{!}{
\begin{tabular}{lcccccccccc}
\toprule
\textbf{Task} & \textbf{NFTR} & \textbf{HIQL} & \textbf{SAW} & \textbf{OTA} & \textbf{QRL} & \textbf{CRL} & \textbf{TMD} & \textbf{MQE} & \textbf{GCFQL} & \textbf{H-GCFQL} \\
\midrule
\multicolumn{11}{l}{\textit{PointMaze (Stochastic \& Stitching)}} \\
\texttt{pointmaze-teleport-navigate} & \sotabest{53.8 \pm 5.9} & $18 \pm 4$ & $37.5 \pm 4.5$ & $41.2 \pm 6.9$ & $4 \pm 4$ & $24 \pm 6$ & $22.7 \pm 3.5$ & $24.3 \pm 3.5$ & \sotasecond{44.1 \pm 4.2} & $39.8 \pm 1.7$ \\
\texttt{pointmaze-teleport-stitch} & \sotabest{40.8 \pm 8.8} & $34 \pm 4$ & \sotasecond{39.1 \pm 6.8} & $32.2 \pm 3.9$ & $9 \pm 5$ & $4 \pm 3$ & $29.3 \pm 2.2$ & $38.5 \pm 5.6$ & $23.2 \pm 1.4$ & $36.3 \pm 3.2$ \\
\texttt{pointmaze-medium-stitch} & \sotabest{98.0 \pm 0.2} & $74 \pm 6$ & \sotasecond{87.1 \pm 2.2} & $75 \pm 5$ & $80 \pm 12$ & $0 \pm 1$ & $0.1 \pm 0.1$ & $67.6 \pm 3.2$ & $1.8 \pm 0.2$ & $49.9 \pm 3.1$ \\
\midrule
\multicolumn{11}{l}{\textit{AntMaze (Stochastic \& Stitching)}} \\
\texttt{antmaze-teleport-navigate} & \sotasecond{52.0 \pm 0.6} & $42 \pm 3$ & $41.3 \pm 1.4$ & $46.2 \pm 0.8$ & $35 \pm 5$ & \sotabest{53 \pm 2} & $25.4 \pm 2.9$ & $46.6 \pm 3.3$ & $22.1 \pm 8.7$ & $39.8 \pm 1.9$ \\
\texttt{antmaze-teleport-stitch} & \sotabest{44.0 \pm 5.5} & $36 \pm 2$ & $33.8 \pm 0.8$ & $32.9 \pm 3.3$ & $24 \pm 5$ & $31 \pm 4$ & $10.7 \pm 1.8$ & \sotasecond{37.4 \pm 3.2} & $14.7 \pm 2.6$ & $16.9 \pm 2.9$ \\
\texttt{visual-antmaze-teleport-navigate-v0} & \sotasecond{46.4 \pm 3.2} & $37 \pm 2$ & $42.7 \pm 2.3$ & $37.1 \pm 3.5$ & $6 \pm 3$ & \sotabest{48 \pm 2} & $4.8 \pm 0.8$ & $5.8 \pm 1.0$ & $6.2 \pm 3.1$ & $4.6 \pm 2.9$ \\
\texttt{visual-antmaze-teleport-stitch-v0}  & \sotabest{38.9 \pm 1.3} & \sotasecond{37 \pm 4} & $33.1 \pm 1.3$ & $20.0 \pm 3.9$ & $1 \pm 2$ & $32 \pm 6$ & $0.9 \pm 0.4$ & $0.9 \pm 0.4$ & $1.0 \pm 0.8$ & $1.0 \pm 0.8$ \\
\bottomrule
\end{tabular}}
\vspace{-1.0em}
\end{table*}

\textbf{Stochastic, stitching, and manipulation.} \cref{tab:locomotion,tab:manipulation} jointly cover the three regimes targeted by NFTR. On teleport navigation and stitching (PointMaze and AntMaze), triangle-slack supplies an independent geometric signal that the value function alone cannot provide, since the MRN reflects average reachability rather than lucky outliers, and lucky subgoals therefore incur elevated slack. On medium stitching, the NF captures the disconnected subgoal clusters around trajectory endpoints while triangle-slack enforces the additive distance property required for coherent composition, with teleport-stitch combining both effects simultaneously. On manipulation, the multi-modality from different grasp strategies and approach angles is captured by the NF, while RWDR filters subgoals whose elevated slack reflects favorable measurement realizations under observation noise. The pattern is consistent with the mechanistic account in \cref{sec:rwdr,sec:theory}, where PointMaze isolates the geometric signal more cleanly and AntMaze's higher-dimensional locomotion partially obscures it. Replacing the low-level Gaussian with an additional NF (NF2-NFTR, \cref{tab:ablation_nf2}) consistently underperforms NFTR by 2.6 to 8.3\,pp, validating placing the NF only at the high level.
\begin{table*}[t]
\centering
\begin{minipage}[t]{0.52\textwidth}
\centering
\caption{\textbf{OGBench Manipulation Results.} Success rate (\%) on cube and scene tasks. NFTR consistently improves over HIQL, with particularly large gains on noisy datasets. Per row, the best result is \sotabest{orange bold} and the second-best is \sotasecond{underlined}.}
\label{tab:manipulation}
\small
\renewcommand{\arraystretch}{0.95}
\resizebox{\textwidth}{!}{
\begin{tabular}{lccccc}
\toprule
\textbf{Task} & \textbf{NFTR} & \textbf{HIQL} & \textbf{OTA} & \textbf{GC-NF-RLBC} & \textbf{H-GC-NF-RLBC} \\
\midrule
\texttt{cube-single-play}  & \sotabest{47.1 \pm 9.9} & $15 \pm 3$ & $12.8 \pm 1.4$ & \sotasecond{18.7 \pm 0.3} & $9.1 \pm 2.5$ \\
\texttt{cube-single-noisy} & \sotabest{67.4 \pm 0.9} & \sotasecond{41 \pm 6} & $39.7 \pm 2.1$ & $14.1 \pm 0.8$ & $18.3 \pm 1.6$ \\
\texttt{scene-play}        & \sotabest{45.6 \pm 3.8} & \sotasecond{38 \pm 3} & $19.4 \pm 4.4$ & $10.2 \pm 1.2$ & $20.4 \pm 2.6$ \\
\texttt{scene-noisy}       & \sotabest{32.6 \pm 5.8} & \sotasecond{25 \pm 4} & $12.1 \pm 1.0$ & $4.1 \pm 0.2$ & $11.3 \pm 0.7$ \\
\bottomrule
\end{tabular}}
\end{minipage}%
\hfill
\begin{minipage}[t]{0.46\textwidth}
\centering
\caption{\textbf{Ablation on Hierarchical Policy.} Success rate (\%) with full NFTR compared to replacing both levels with Normalizing Flows (NF2-NFTR). Per row, the best result is \sotabest{orange bold}.}
\label{tab:ablation_nf2}
\small
\renewcommand{\arraystretch}{0.95}
\resizebox{0.86\textwidth}{!}{
\begin{tabular}{lccc}
\toprule
\textbf{Task} & \textbf{HIQL} & \textbf{NFTR} & \textbf{NF2-NFTR} \\
\midrule
\texttt{pointmaze-teleport-navigate} & $18 \pm 4$ & \sotabest{53.8 \pm 5.9} & $45.5 \pm 10.5$ \\
\texttt{pointmaze-teleport-stitch}   & $34 \pm 4$ & \sotabest{40.8 \pm 8.8} & $38.5 \pm 7.0$  \\
\texttt{antmaze-teleport-navigate}   & $42 \pm 3$ & \sotabest{52.0 \pm 0.6} & $49.4 \pm 1.7$  \\
\texttt{antmaze-teleport-stitch}     & $36 \pm 2$ & \sotabest{44.0 \pm 5.5} & $40.2 \pm 4.8$  \\
\bottomrule
\end{tabular}}
\end{minipage}
\vspace{-1em}
\end{table*}

\textbf{Comparison with distance-based methods.} TMD, QRL, and CRL excel where precise reachability or robust contrastive value estimation is the bottleneck, especially in high-dimensional flat control. NFTR is complementary. It uses distance as a geometric scorer inside a hierarchy, which helps most when multi-modal subgoals and stochastic transitions co-occur.

\subsection{Ablation Studies}

We conduct ablation studies to understand the contribution of each component in NFTR. Results are presented in Q\&A format for clarity.

\textbf{Q1: Does the Normalizing Flows policy improve over Gaussian?}

\textbf{A1:} Yes, substantially. \cref{fig:ablation_component} shows that adding the NF policy alone (B1: NF-HIQL) yields notable improvements over HIQL (B0) across all four teleport environments. The most striking gain appears on \texttt{pm-tel-nav} where B1 more than triples B0's performance. These gains confirm that multi-modal subgoal modeling addresses a fundamental limitation of Gaussian policies. When valid subgoals form disconnected clusters around corridor entrances or teleporter exits, Gaussian policies average across modes and propose invalid subgoals located in obstacle regions between the valid clusters.

\begin{figure*}[tb]
\centering
\begin{minipage}[t]{0.49\textwidth}
\centering
\includegraphics[width=\textwidth]{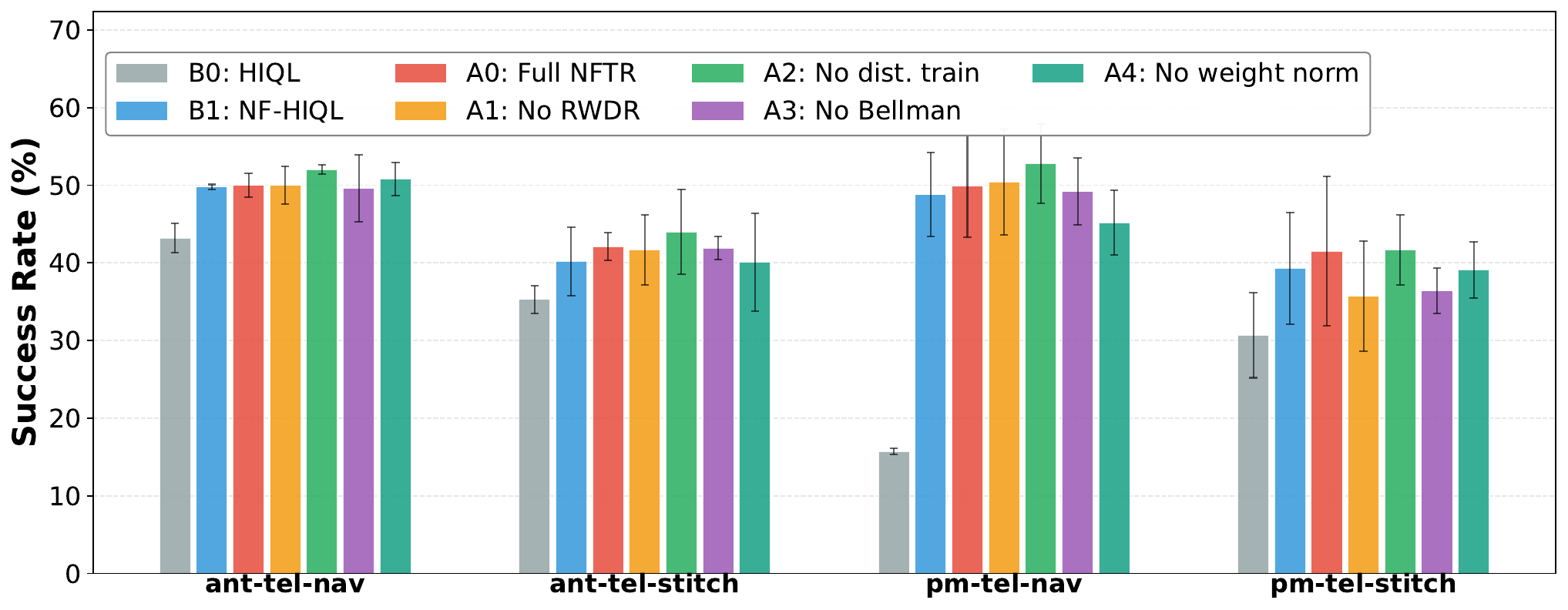}
\caption{\textbf{Ablation: Component Addition.} B0=HIQL baseline, B1=NF-HIQL (NFs policy only), A0=full NFTR, A1=no RWDR ($\kappa=0$), A2=no distance training, A3=no Bellman loss, A4=no weight normalization. Results show success rate (\%) on four teleport tasks.}
\label{fig:ablation_component}
\end{minipage}%
\hfill
\begin{minipage}[t]{0.49\textwidth}
\centering
\includegraphics[width=\textwidth]{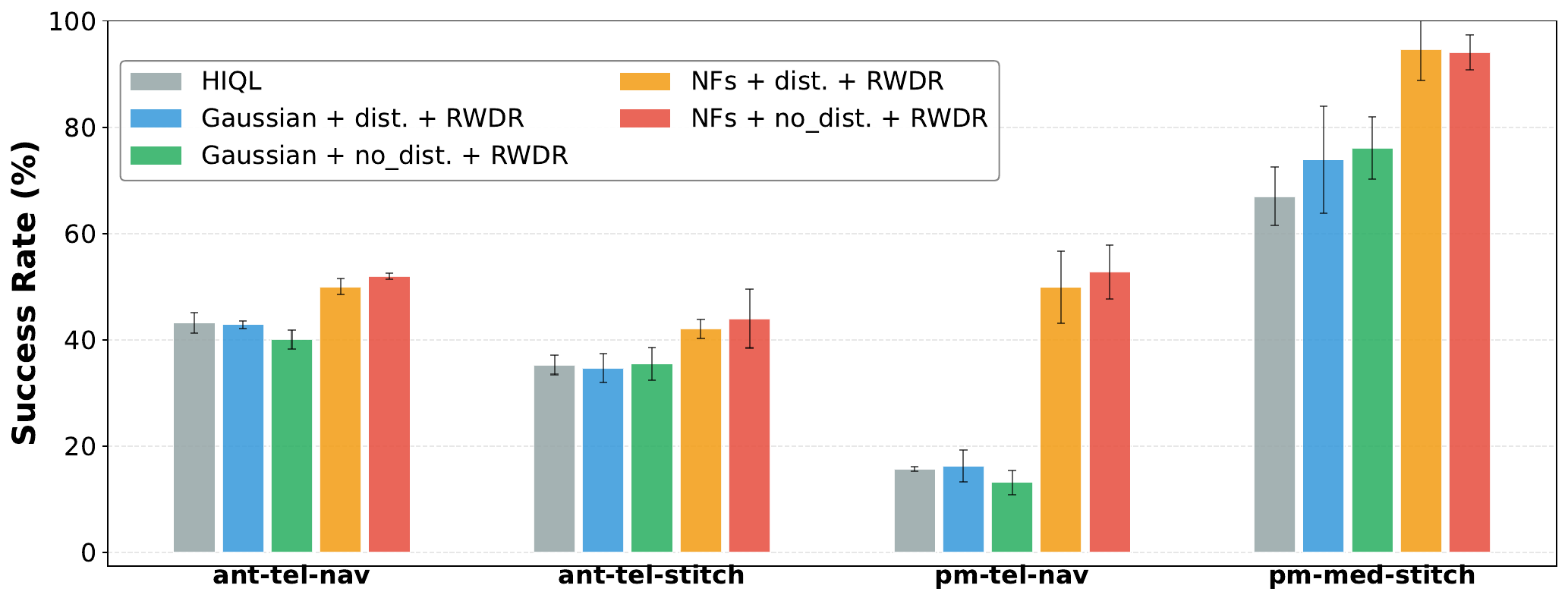}
\caption{\textbf{Ablation: NF vs Gaussian.} Comparing policy architectures (Gaussian vs NFs) combined with distance learning configurations (with/without trained distance) and RWDR. Results show success rate (\%) on four tasks.}
\label{fig:ablation_nf_gaussian}
\end{minipage}
\vspace{-1.4em}
\end{figure*}
\textbf{Q2: Does RWDR improve over value-only weighting?}

\textbf{A2:} Yes, particularly on stitching tasks. Comparing A0 (full NFTR) with A1 ($\kappa=0$) in \cref{fig:ablation_component}, removing RWDR notably degrades performance on \texttt{pm-tel-stitch}. Stitching requires combining trajectory segments, and the dataset contains transitions where subgoals were reached through non-repeatable lucky sequences. The triangle-slack penalty identifies these as geometrically inconsistent because the distance network trained on average behavior does not support such unreliable paths.

\textbf{Q3: Is distance network training necessary?}

\textbf{A3:} \cref{fig:ablation_component} compares trained (A0) versus untrained (A2) configurations. The performance difference remains modest across all tasks, with A2 achieving comparable or slightly better results. This suggests that the MRN architecture's built-in triangle inequality is the primary source of RWDR's effectiveness rather than precise distance estimation. Removing the Bellman loss (A3) similarly shows minimal impact. Weight normalization (A4) provides numerical stability by preventing extreme weights from dominating updates, with its removal causing performance drops on \texttt{pm-tel-nav} and increased variance across seeds.

\textbf{Q4: How does the NF compare to a Gaussian across different configurations?}

\textbf{A4:} \cref{fig:ablation_nf_gaussian} shows that the NF consistently outperforms the Gaussian regardless of distance learning configuration. The largest gap appears on \texttt{pm-tel-nav} and \texttt{pm-med-stitch} where both NF variants dramatically outperform both Gaussian variants. This confirms that when subgoals form multiple disconnected clusters, no weighting scheme can compensate for a unimodal policy placing its mean in invalid regions. The NF architecture directly represents multi-modal structure, enabling effective subgoal selection even when valid clusters are well-separated by obstacles.

\begin{wraptable}{r}{0.55\textwidth}
\vspace{-0.8em}
\caption{\textbf{Ablation: RWDR Coefficient $\kappa$.} Sensitivity analysis showing robustness across $\kappa \in [0.5, 2.0]$ with $\kappa{=}1.0$ optimal for stochastic tasks. Per column (i.e., per environment), the best $\kappa$ is \sotabest{orange bold} and the second-best is \sotasecond{underlined}.}
\label{tab:ablation_kappa}
\centering
\small
\resizebox{0.55\textwidth}{!}{
\begin{tabular}{lcccc}
\toprule
$\kappa$ & \texttt{ant-tel-nav} & \texttt{ant-tel-stitch} & \texttt{pm-med-stitch} & \texttt{cube-play} \\
\midrule
0.0 & $49.9 \pm 0.3$ & $41.1 \pm 3.0$ & \sotasecond{98.0 \pm 0.2} & $22.2 \pm 6.0$ \\
0.5 & \sotasecond{50.5 \pm 0.3} & $40.5 \pm 3.7$ & \sotabest{98.1 \pm 1.2} & $24.7 \pm 4.3$ \\
1.0 & \sotabest{52.0 \pm 0.6} & \sotabest{44.0 \pm 5.5} & $95.3 \pm 2.9$ & $31.3 \pm 10.0$ \\
2.0 & $48.0 \pm 3.1$ & \sotasecond{41.6 \pm 1.6} & $86.4 \pm 8.7$ & \sotabest{47.1 \pm 9.9} \\
5.0 & $41.9 \pm 3.2$ & $33.1 \pm 3.4$ & $87.9 \pm 5.5$ & \sotasecond{46.2 \pm 4.4} \\
\bottomrule
\end{tabular}}
\vspace{-1em}
\end{wraptable}
\textbf{Q5: How sensitive is performance to the RWDR coefficient $\kappa$?}

\textbf{A5:} \cref{tab:ablation_kappa} shows robustness across $\kappa \in [0.5, 2.0]$. The optimum is $\kappa=1.0$ for stochastic tasks, where geometric filtering provides the greatest benefit. On \texttt{pm-med-stitch}, $\kappa=0$ achieves the best performance, as the NF policy alone handles multi-modal structure without geometric filtering. On \texttt{cube-play}, higher $\kappa$ improves performance, showing that stronger filtering removes noisy candidates when the distance network accurately captures geometry in these structured manipulation environments.


\section{Conclusion}
\label{sec:conclusion}

We presented NFTR, which addresses HIQL's mode collapse and optimistic bias in stochastic offline GCRL through a Normalizing Flow high-level policy and triangle-slack reweighting. The combined system admits a three-term suboptimality decomposition (\cref{thm:rwdr}) and inherits AWR's population-level monotonic improvement (\cref{prop:monotonic}). Empirical gains on OGBench concentrate where both failure modes co-occur and recede where neither dominates.

\textbf{Limitations and Future Work.} On very long-horizon (\texttt{antmaze-giant-stitch}) and high-DoF contact-rich locomotion (\texttt{humanoidmaze}), the bottleneck shifts to long-horizon planning or physical realizability of NF-sampled modes, where temporal-abstraction methods such as OTA dominate. The link between triangle-slack and stochastic invalidity is mechanistic rather than a tight bound, and \cref{prop:monotonic} is a population-level statement that does not transfer verbatim to SGD on non-convex parameterizations. Combining NFTR with temporal abstraction and tightening the geometric-stochastic link are natural next steps.

\newpage
{
\small
\bibliographystyle{plain}
\bibliography{ref}
}


\newpage
\input{appendix}

\end{document}

%% file: appendix.tex
\newpage
\appendix
\crefalias{section}{appendix}
\crefalias{subsection}{appendix}
\crefalias{subsubsection}{appendix}
\part*{\Huge Appendix}

\section{Related Work}
\label{app:related}

\textbf{Offline Goal-Conditioned RL.}
Offline GCRL aims to learn goal-reaching policies from static datasets without environment interaction. Existing approaches can be categorized into several paradigms: goal-conditioned hindsight relabeling~\citep{andrychowicz2017hindsight,ke2025conservative,wang2025goplan},
hierarchical or subgoal-based learning~\citep{park2023hiql,ahn2025option,giammarino2025physics,zhou2025flattening,lei2025gchr,haramati2026hierarchical},
graph-based planning~\citep{yoon2024beag,eysenbach2025inference,baek2025graph,luo2025generative},
metric learning~\citep{wang2023optimal,park2024foundation,myers2024learning,myers2025offline,myers2025quasimetric,myers2025horizon},
dual optimization~\citep{ma2022far,sikchi2024score,xu2025optimal},
generative modeling~\citep{hong2023diffused,reuss2023goal,jain2024learning,lee2025statecovering,haramati2026hierarchical},
and test-time adaption~\citep{opryshko2025test,bagatella2025testtime}. HIQL~\citep{park2023hiql} extracts hierarchical policies from a single value function, achieving state-of-the-art performance on many tasks.
However, HIQL relies on value-only subgoal selection, which can be overly optimistic in stochastic settings; our method augments subgoal learning with a geometric consistency signal to mitigate this issue.

\textbf{Quasimetric Learning and Stochastic Environments.}
Quasimetric RL (QRL)~\citep{wang2023optimal} establishes that optimal goal-conditioned value functions are exactly quasimetrics satisfying the triangle inequality \emph{under deterministic dynamics}, an assumption made explicit by the authors in their main text. The Metric Residual Network (MRN)~\citep{liu2023metric} enforces this quasimetric structure architecturally. Contrastive Metric Distillation (CMD)~\citep{myers2024learning} shows that contrastive temporal distances retain the triangle inequality even under stochastic dynamics. Building on this, Temporal Metric Distillation (TMD)~\citep{myers2025offline} is the first to systematically address quasimetric distance learning in offline GCRL with suboptimal, stochastic data, by combining Monte Carlo contrastive learning with a quasimetric architecture to recover optimal distances. More recently, Multistep Quasimetric Estimation (MQE)~\citep{zheng2026scaling} scales quasimetric learning by fitting distances with multistep Monte-Carlo returns, combining TD's local optimality with MC's global propagation. NFTR differs from this line in two aspects. First, we treat the quasimetric not as the policy-extraction backbone but as a \emph{geometric scorer} that reweights the AWR objective inside HIQL's hierarchy, leaving value learning untouched. Second, we use the triangle inequality \emph{as a regularizer on the high-level policy} via the slack penalty, which targets optimistic bias from stochastic transitions rather than the distance-estimation error itself.

\textbf{Normalizing Flows in RL.}
Normalizing Flows enable exact density computation through invertible transformations~\citep{dinh2014nice, dinh2017density, kingma2018glow}. In RL, NFs have been used for policy representation~\citep{ward2019improving, singh2020parrot}, maximum entropy RL~\citep{chao2024maximum}, and action-constrained optimization~\citep{brahmanage2023flowpg, akimov2022let}. Flow Q-Learning (FQL) and its hierarchical variant (HFQL)~\citep{park2025flow} use flow-matching policies with distillation for stable value-guided offline RL, and NFs \citep{ghugare2025normalizing} show NFs can serve as Q-functions or policies on broad benchmarks. By contrast, our use of NFs is \emph{forced} rather than chosen. As shown in \cref{sec:nf_policy} (\cref{thm:mode_averaging}), AWR-based hierarchical subgoal selection imposes three simultaneous requirements, namely multi-modality, exact log-density, and single-pass sampling, which together pick out conditional NFs as the minimal class meeting all three. Diffusion fails the latter two, while flow matching fails the third (\cref{app:nf_vs_other}).

\textbf{Hierarchical Offline RL.}
Beyond HIQL, recent hierarchical offline GCRL methods tackle long-horizon challenges via distinct strategies: SAW~\citep{zhou2025flattening} flattens the hierarchy via advantage-weighted subgoal sampling, OTA~\citep{ahn2025option} contracts effective horizons through temporal abstraction, and Eik-HiQRL~\citep{giammarino2025physics} enforces quasimetric structure via Eikonal PDE constraints. While these reshape the hierarchy's \emph{structure} through flattening, temporal abstraction, or PDE constraints, we leave HIQL's structure intact and instead address a property all share but none has examined, namely the behavior of AWR-weighted MLE on multi-modal subgoal data with stochastic advantages. Our contributions therefore compose with such structural improvements rather than competing with them.
\clearpage
\section{Notation}
\label{app:notation}
\begin{table*}[!h]
\centering
\caption{Summary of additional notation used in the appendix (proofs, algorithm, and extended results). Symbols already defined in the main text (\cref{sec:background,sec:method}) are not repeated.}
\label{tab:notation}
\footnotesize
\setlength{\tabcolsep}{4pt}
\renewcommand{\arraystretch}{1.5}
\begin{tabular}{@{}p{0.26\textwidth} p{0.56\textwidth} p{0.13\textwidth}@{}}
\toprule
\textbf{Symbol} & \textbf{Description} & \textbf{Ref.} \\
\midrule
\multicolumn{3}{@{}l}{\textit{Multi-modal Structure and AWR Tilting (proof of \cref{thm:mode_averaging})}} \\
$M(s,g),\;\pi_{m}(s,g),\;q_{m}(z|s,g)$ & State-dependent mode count; mode weights and conditional mode densities & \cref{eq:dataset_mixture} \\
$\tilde{\pi}_{m},\;\tilde{q}_{m},\;\tilde{\mu}_{m}$ & AWR-tilted mode weights, densities, and centers & \cref{thm:mode_averaging} \\
$p^{\star}_{s,g}(z),\;Z_{s,g},\;\beta_{m}(s,g)$ & AWR target density; global and mode-wise normalizers & \cref{thm:mode_averaging} \\
$\mathcal{S}_{\mathrm{wall}}\subset\mathbb{R}^{d}$ & Unreachable region in subgoal space & \cref{thm:mode_averaging} \\
$\mu_{\theta}(s,g),\;\sigma$ & Mean and isotropic std of the Gaussian high-level policy & \cref{thm:mode_averaging} \\
$A^{H}(s,w,g)$ & Generic high-level advantage (instantiated as $A_{\mathrm{env}}$ in NFTR) & \cref{eq:awr_gaussian} \\
\midrule
\multicolumn{3}{@{}l}{\textit{Optimization and Loss Components (algorithm and \cref{eq:total_loss})}} \\
$\alpha_{L}$ & Low-level AWR temperature & \cref{alg:training} \\
$\tau,\;L_{\tau}^{2}(u)$ & IQL expectile parameter; asymmetric expectile loss & \cref{eq:value_loss} \\
$\tau_{\mathrm{ema}}$ & Target-network EMA coefficient & \cref{alg:training} \\
$m=-\log\gamma$ & Minimum one-step distance cost in Bellman consistency & \cref{app:distance_learning} \\
$\Delta_{\max}$ & Slack clipping threshold for numerical stability & \cref{sec:implementation} \\
$\mathcal{L}_{V},\;\mathcal{L}_{\pi^{L}}$ & IQL value loss; low-level AWR loss & \cref{eq:value_loss,eq:total_loss} \\
$\mathcal{L}_{\mathrm{NCE}},\;\mathcal{L}_{\mathrm{Bell}},\;\mathcal{L}_{d}$ & Contrastive, Bellman, and total distance losses & \cref{app:distance_learning} \\
$\mathcal{L}^{\mathrm{AWR}},\;\mathcal{L}_{\mathrm{RWDR}}$ & Generic AWR objective; RWDR weighted-MLE objective & \cref{eq:awr_gaussian,eq:high_loss} \\
$g^{+},\;g^{-}$ & Positive / negative goal pairs for contrastive distance learning & \cref{app:distance_learning} \\
\midrule
\multicolumn{3}{@{}l}{\textit{Theoretical Quantities (\cref{thm:rwdr,prop:monotonic})}} \\
$\pi^{H,*},\;\hat{\pi}^{H},\;\pi^{H,*}_{\mathcal{D}}$ & Optimal, learned, and data-optimal high-level policies & \cref{thm:rwdr,prop:monotonic} \\
$J(\pi),\;J_{\mathrm{RWDR}}(\pi)$ & True expected return; RWDR objective value & \cref{thm:rwdr} \\
$A_{\max},\;D_{\max}$ & Uniform bounds on $|A_{\mathrm{env}}|$ and $d_{\theta}$ & \cref{thm:rwdr} \\
$R_{\kappa}(d_{\theta})$ & Geometric regularization term in the suboptimality bound & \cref{thm:rwdr} \\
$\Delta^{*}(s,w,g)$ & Triangle-slack under the optimal distance $d^{*}$ & \cref{prop:slack} \\
$\tilde{p}(z|s,g)$ & RWDR-weighted empirical target distribution & \cref{prop:monotonic} \\
$n,\;|\Pi|,\;d^{\pi}(s,w)$ & Sample size; policy-class complexity; visitation distribution under $\pi$ & \cref{ass:coverage,thm:rwdr} \\
$C_{\pi},\;\epsilon_{V}$ & Concentrability coefficient; value estimation error bound & \cref{ass:coverage,ass:value} \\
\bottomrule
\end{tabular}
\end{table*}
\section{Algorithm Details}
\label{app:algorithm}
\begin{algorithm}[t]
\caption{NFTR Training}
\label{alg:training}
\begin{algorithmic}[1]
\STATE \textbf{Input:} Offline dataset $\mathcal{D}$, hyperparameters $\alpha_H, \alpha_L, \kappa, \lambda_d, \lambda_{\text{Bell}}, \tau, \tau_{\text{ema}}$
\STATE \textbf{Initialize:} Value networks $V_1, V_2$, target networks $V_1^{\text{tgt}}, V_2^{\text{tgt}}$, goal representation $\phi$, low-level actor $\pi^L$, high-level NFs $\pi^H_{\text{NF}}$, distance network $d_\theta$
\FOR{each training iteration}
    \STATE Sample batch from $\mathcal{D}$: $\{(s_t, a_t, s_{t+1}, g_{\text{value}}, g_{\text{actor}}, w)\}_{i=1}^B$
    \STATE \textcolor{gray}{// Value Learning (IQL-style expectile regression)}
    \STATE $q \leftarrow r + \gamma \cdot \min(V_1^{\text{tgt}}(s', g), V_2^{\text{tgt}}(s', g))$
    \STATE $\mathcal{L}_V \leftarrow \frac{1}{B}\sum_{i=1}^B \left[L_\tau^2(q_i - V_1(s_i, g_i)) + L_\tau^2(q_i - V_2(s_i, g_i))\right]$
    \STATE \textcolor{gray}{// Low-Level Actor (AWR with value advantage)}
    \STATE $A^L_i \leftarrow V(s'_i, g^L_i) - V(s_i, g^L_i)$
    \STATE $\mathcal{L}_{\pi^L} \leftarrow -\frac{1}{B}\sum_{i=1}^B \exp(\alpha_L A^L_i) \cdot \log \pi^L(a_i | s_i, \phi([s_i; g^L_i]))$
    \STATE \textcolor{gray}{// Distance Learning (Contrastive + Bellman, see \cref{app:distance_learning})}
    \STATE Sample positive goals $g^+$ from future states, negative goals $g^-$ from other trajectories
    \STATE $\mathcal{L}_{\text{NCE}} \leftarrow -\frac{1}{B}\sum_{i=1}^B \log \frac{\exp(-d_\theta(s_i, g^+_i))}{\exp(-d_\theta(s_i, g^+_i)) + \exp(-d_\theta(s_i, g^-_i))}$
    \STATE $\mathcal{L}_{\text{Bell}} \leftarrow \frac{1}{B}\sum_{i=1}^B \left(\max(0, -\log\gamma + d_\theta(s'_i, g_i) - d_\theta(s_i, g_i))\right)^2$
    \STATE $\mathcal{L}_d \leftarrow \mathcal{L}_{\text{NCE}} + \lambda_{\text{Bell}} \mathcal{L}_{\text{Bell}}$
    \STATE \textcolor{gray}{// High-Level Actor (RWDR with NFs policy)}
    \STATE $A_{\text{env},i} \leftarrow \texttt{stop\_gradient}(V(w_i, g_i) - V(s_i, g_i))$
    \STATE $\Delta_i \leftarrow \max(0, \texttt{stop\_gradient}(d_\theta(s_i, w_i) + d_\theta(w_i, g_i) - d_\theta(s_i, g_i)))$
    \STATE $\Delta_i \leftarrow \min(\Delta_i, \Delta_{\max})$ \hfill \textcolor{gray}{// Slack clipping for stability}
    \STATE $\log w_i \leftarrow \alpha_H \cdot A_{\text{env},i} - \kappa \cdot \Delta_i$
    \STATE $w_i \leftarrow \min(\exp(\log w_i), 100)$ \hfill \textcolor{gray}{// Exponential clipping}
    \STATE $w_i \leftarrow w_i / (\frac{1}{B}\sum_{j=1}^B w_j + \epsilon)$ \hfill \textcolor{gray}{// Weight normalization}
    \STATE $z_i \leftarrow \phi([s_i; w_i])$ \hfill \textcolor{gray}{// Target subgoal representation}
    \STATE $\epsilon_{\text{NF},i}, \log|\det J_i| \leftarrow \pi^H_{\text{NF}}.\text{forward}(s_i, g_i, z_i)$
    \STATE $\log \pi^H_i \leftarrow \log \mathcal{N}(\epsilon_{\text{NF},i}; 0, I) + \log|\det J_i|$
    \STATE $\mathcal{L}_{\pi^H} \leftarrow -\frac{1}{B}\sum_{i=1}^B w_i \cdot \log \pi^H_i$
    \STATE \textcolor{gray}{// Total Loss and Update}
    \STATE $\mathcal{L} \leftarrow \mathcal{L}_V + \mathcal{L}_{\pi^L} + \mathcal{L}_{\pi^H} + \lambda_d \mathcal{L}_d$
    \STATE Update all parameters via gradient descent on $\mathcal{L}$
    \STATE $V^{\text{tgt}} \leftarrow \tau_{\text{ema}} V^{\text{tgt}} + (1 - \tau_{\text{ema}}) V$ \hfill \textcolor{gray}{// Target network EMA update}
\ENDFOR
\STATE \textbf{return} $\pi^H_{\text{NF}}, \pi^L$
\end{algorithmic}
\end{algorithm}
This section spells out the training-time and test-time machinery summarized in \cref{sec:implementation}. We first describe the quasimetric distance learning that underlies the triangle slack signal, then present the complete per-iteration training pseudocode, and finally describe the inference procedure used at evaluation.

\subsection{Quasimetric Distance Learning}
\label{app:distance_learning}

We parameterize the distance function $d_\theta$ with a Metric Residual Network (MRN)~\citep{liu2023metric},
\begin{equation}
d_\theta(s, g)
\;=\; \|\phi_{\text{sym}}(s) - \phi_{\text{sym}}(g)\|_2
\;+\; \mathrm{ReLU}\!\left(\max_i\!\big(\phi_{\text{asym}}^{(i)}(s)
- \phi_{\text{asym}}^{(i)}(g)\big)\right),
\label{eq:mrn_arch}
\end{equation}
where $\phi_{\text{sym}}, \phi_{\text{asym}}: \mathcal{S} \to \mathbb{R}^{d}$ are learned embeddings. The symmetric term provides a Euclidean component, while the asymmetric max-ReLU term captures directionality for one-way transitions. By construction this architecture satisfies the triangle inequality $d_\theta(s,g)\leq d_\theta(s,w)+d_\theta(w,g)$ for all $s,w,g$, which is exactly the condition required by \cref{ass:geometric} and which makes the slack $\Delta(s,w,g)$ in \cref{eq:triangle_slack} non-negative.

We train $d_\theta$ via two complementary losses. The contrastive loss takes positive pairs $(s, g^+)$ from the same trajectory with $g^+$ sampled geometrically from future states, and negative pairs $(s, g^-)$ from different trajectories,
\begin{equation}
\mathcal{L}_{\text{NCE}} \;=\; -\mathbb{E}\!\left[\log \frac{\exp(-d_\theta(s, g^+))}{\exp(-d_\theta(s, g^+)) + \exp(-d_\theta(s, g^-))}\right].
\label{eq:nce_loss}
\end{equation}
The Bellman consistency loss enforces that distances respect one-step transition dynamics,
\begin{equation}
\mathcal{L}_{\text{Bell}} \;=\; \mathbb{E}\!\left[\left(\max\!\big(0,\, m + d_\theta(s', g) - d_\theta(s, g)\big)\right)^2\right],
\label{eq:bell_loss}
\end{equation}
where $m = -\log \gamma$ is the minimum one-step cost. The total distance loss is $\mathcal{L}_d = \mathcal{L}_{\text{NCE}} + \lambda_{\text{Bell}} \mathcal{L}_{\text{Bell}}$, which is the term entering \cref{eq:total_loss} in the main text. As reported in \cref{sec:experiments} (Q3, ablation A2/A3), an untrained $d_\theta$ already attains most of RWDR's benefit, which confirms that the architectural triangle inequality in \cref{eq:mrn_arch} is the primary source of the slack signal and that distance accuracy plays a secondary role.

\subsection{Training Pseudocode}
\label{app:training_pseudocode}

The complete NFTR training procedure is given in \cref{alg:training}. The algorithm jointly trains the value functions, the distance function $d_\theta$, the low-level actor $\pi^L$, and the high-level Normalizing Flow policy $\pi^H_{\mathrm{NF}}$. Per iteration, value, distance, and policy losses are evaluated on a single shared minibatch, and the high-level policy uses the RWDR weight (\cref{eq:rwdr_weight}) computed from $A_{\mathrm{env}}$ and $\Delta$ with stop-gradient on $V_\theta$ and $d_\theta$.
\subsection{Inference Procedure}
\label{app:inference}

At test time, given the current state $s$ and goal $g$, we generate a subgoal and execute it via the following steps:
\begin{enumerate}
    \item Sample $\epsilon \sim \mathcal{N}(0, I)$ from the NF base distribution.
    \item Compute the subgoal representation $z = f_\theta^{-1}(\epsilon;\, [s;g])$ via the inverse NF.
    \item Apply length normalization $z \leftarrow z \cdot \|\phi([s;g])\|_2 / (\|z\|_2 + \varepsilon)$ for numerical stability.
    \item Execute the low-level policy $a = \pi^L(s, z)$ for $k$ steps, or until the subgoal is reached.
    \item Repeat from step 1 with the updated state.
\end{enumerate}

\section{Proofs}
\label{app:proofs}

We provide complete proofs for all theoretical results stated in the main text. The proofs establish the geometric properties of triangle-slack, convergence guarantees for RWDR, robustness to stochastic optimism, and monotonic improvement.

\subsection{Proof of \cref{thm:mode_averaging}}
\label{app:proof_mode_averaging}

\textbf{Setup.} Fix $(s,g)$ and write $A^H(z) := A^H(s,w(z),g)$ for brevity, where $w(z)$ is the deterministic preimage of $z=\phi([s;w])$ along the sampled trajectory. By the bounded-and-integrable assumption on $e^{\alpha_H A^H}$,
\begin{equation*}
Z_{s,g} \;\coloneqq\; \mathbb{E}_{z\sim p_{\mathcal{D}}(\cdot\mid s,g)}\!\bigl[e^{\alpha_H A^H(z)}\bigr] \;\in\;(0,\infty),
\end{equation*}
so the AWR target density $p^{\star}_{s,g}(z)=Z^{-1}_{s,g}\,e^{\alpha_H A^H(z)}\,p_{\mathcal{D}}(z\mid s,g)$ is a well-defined probability density on $\mathbb{R}^d$ that retains the support of $p_{\mathcal{D}}$.

\textbf{Step~1: closed-form minimizer.} Expand $\log\mathcal{N}(z;\mu,\sigma^2 I_d) = -\tfrac{1}{2\sigma^2}\|z-\mu\|_2^2 + C(\sigma)$, where $C(\sigma)=-\tfrac{d}{2}\log(2\pi\sigma^2)$ is independent of $\mu$. Substituting into \cref{eq:awr_gaussian} gives, up to terms that do not depend on $\mu_\theta$,
\begin{equation*}
\mathcal{L}^{\mathrm{AWR}}(\theta;s,g) \;=\; \frac{1}{2\sigma^2}\,\mathbb{E}_{z\sim p_{\mathcal{D}}(\cdot\mid s,g)}\!\left[e^{\alpha_H A^H(z)}\,\|z-\mu_\theta(s,g)\|_2^2\right] \;+\; \text{const}.
\end{equation*}
The change of measure $\mathbb{E}_{p_{\mathcal{D}}}[e^{\alpha_H A^H}f(z)]=Z_{s,g}\,\mathbb{E}_{p^{\star}_{s,g}}[f(z)]$, valid for any integrable $f$, then yields
\begin{equation*}
\mathcal{L}^{\mathrm{AWR}}_\mu(\theta;s,g) \;=\; \frac{Z_{s,g}}{2\sigma^2}\,\mathbb{E}_{z\sim p^{\star}_{s,g}}\!\bigl[\|z-\mu_\theta(s,g)\|_2^2\bigr],
\end{equation*}
which is strictly convex in $\mu_\theta(s,g)$ since $\sigma>0$ and $Z_{s,g}>0$. The first-order condition
\begin{equation*}
\nabla_{\mu_\theta(s,g)}\mathcal{L}^{\mathrm{AWR}}_\mu \;=\; -\,\frac{Z_{s,g}}{\sigma^2}\,\mathbb{E}_{z\sim p^{\star}_{s,g}}\!\bigl[z-\mu_\theta(s,g)\bigr] \;=\; 0
\end{equation*}
admits the unique solution $\mu^{\star}_\theta(s,g) = \mathbb{E}_{z\sim p^{\star}_{s,g}}[z]$.

\textbf{Step~2: mixture decomposition of $p^{\star}_{s,g}$.} Substituting \cref{eq:dataset_mixture},
\begin{align*}
p^{\star}_{s,g}(z) \;&=\; Z^{-1}_{s,g}\,e^{\alpha_H A^H(z)}\sum_{m=1}^{M(s,g)}\pi_m(s,g)\,q_m(z\mid s,g) \\
&=\; \sum_{m=1}^{M(s,g)} \underbrace{\frac{\pi_m(s,g)\,\beta_m(s,g)}{Z_{s,g}}}_{\,\eqqcolon\,\tilde{\pi}_m(s,g)} \;\cdot\; \underbrace{\frac{e^{\alpha_H A^H(z)}\,q_m(z\mid s,g)}{\beta_m(s,g)}}_{\,\eqqcolon\,\tilde{q}_m(z\mid s,g)},
\end{align*}
where $\beta_m(s,g)\coloneqq\mathbb{E}_{z\sim q_m(\cdot\mid s,g)}[e^{\alpha_H A^H(z)}]\in(0,\infty)$. Each $\tilde{q}_m$ is a normalized density on $\mathbb{R}^d$, namely the AWR-tilted version of $q_m$. The weights $\tilde{\pi}_m$ are non-negative and sum to one, since
\begin{equation*}
\sum_{m=1}^{M(s,g)}\pi_m(s,g)\,\beta_m(s,g) \;=\; \mathbb{E}_{z\sim p_{\mathcal{D}}(\cdot\mid s,g)}\!\bigl[e^{\alpha_H A^H(z)}\bigr] \;=\; Z_{s,g}.
\end{equation*}
Therefore $p^{\star}_{s,g} = \sum_{m=1}^{M(s,g)} \tilde{\pi}_m\,\tilde{q}_m$ is a valid mixture.

\textbf{Step~3: AWR-tilted modal centers.} Define $\tilde{\mu}_m(s,g) \coloneqq \mathbb{E}_{z\sim\tilde{q}_m(\cdot\mid s,g)}[z]$. Direct computation gives
\begin{equation*}
\tilde{\mu}_m(s,g) \;=\; \frac{\mathbb{E}_{z\sim q_m(\cdot\mid s,g)}\!\bigl[e^{\alpha_H A^H(z)}\,z\bigr]}{\mathbb{E}_{z\sim q_m(\cdot\mid s,g)}\!\bigl[e^{\alpha_H A^H(z)}\bigr]}.
\end{equation*}
Since $q_m(\cdot\mid s,g)$ is supported on a path-connected region disjoint from $\mathcal{S}_{\mathrm{wall}}$ and AWR tilting preserves support, $\tilde{q}_m$ is supported on the same region, so $\tilde{\mu}_m(s,g)\in\mathrm{conv}\bigl(\mathrm{supp}(q_m(\cdot\mid s,g))\bigr)$.

\textbf{Step~4: convex combination.} Combining Steps~1 to 3, by linearity of expectation,
\begin{equation*}
\mu^{\star}_\theta(s,g) \;=\; \mathbb{E}_{z\sim p^{\star}_{s,g}}[z] \;=\; \sum_{m=1}^{M(s,g)}\tilde{\pi}_m(s,g)\,\mathbb{E}_{z\sim\tilde{q}_m(\cdot\mid s,g)}[z] \;=\; \sum_{m=1}^{M(s,g)}\tilde{\pi}_m(s,g)\,\tilde{\mu}_m(s,g),
\end{equation*}
which establishes \cref{eq:gaussian_closed_form}. Since $\tilde{\pi}_m\geq 0$ and $\sum_m\tilde{\pi}_m=1$, the right-hand side is a convex combination, hence $\mu^{\star}_\theta(s,g)\in\mathrm{conv}\{\tilde{\mu}_m(s,g)\}_{m=1}^{M(s,g)}$.

\textbf{Step~5: when the convex combination falls inside $\mathcal{S}_{\mathrm{wall}}$.} Assume $\mathcal{S}_{\mathrm{wall}}\subset\mathbb{R}^d$ strictly separates the supports $\{\mathrm{supp}(q_m)\}_m$ into two non-empty groups, indexed by $G_1$ and $G_2$. Let
\begin{equation*}
p_j \;=\; \sum_{m\in G_j}\tilde{\pi}_m(s,g), \qquad \bar{\mu}_j \;=\; \sum_{m\in G_j}\frac{\tilde{\pi}_m(s,g)}{p_j}\,\tilde{\mu}_m(s,g), \qquad j\in\{1,2\}.
\end{equation*}
By Step~4, $\mu^{\star}_\theta(s,g) = p_1\bar{\mu}_1 + p_2\bar{\mu}_2$, with $\bar{\mu}_1$ and $\bar{\mu}_2$ on opposite sides of $\mathcal{S}_{\mathrm{wall}}$ by construction. The line segment $[\bar{\mu}_1,\bar{\mu}_2]$ then crosses $\mathcal{S}_{\mathrm{wall}}$ at an interior point $p^{\star}$. Whenever the AWR-tilted weights produce a $(p_1,p_2)$ that matches the convex coefficient of $p^{\star}$ on this segment, $\mu^{\star}_\theta(s,g)$ lies inside $\mathcal{S}_{\mathrm{wall}}$. The symmetric two-mode configuration $M=2$, $\tilde{\pi}_1=\tilde{\pi}_2=\tfrac{1}{2}$ realizes this generically when $\bar{\mu}_1$ and $\bar{\mu}_2$ are reflectionally symmetric about $\mathcal{S}_{\mathrm{wall}}$, which is the canonical two-corridor case in our experiments.

\textbf{Step~6: irreducible wall mass on any practical scale.} Suppose the configuration in Step~5 places $\mu^{\star}_\theta(s,g)$ in $\mathcal{S}_{\mathrm{wall}}$. Then there exists an open ball $B_r(\mu^{\star}_\theta)\subset\mathcal{S}_{\mathrm{wall}}$ for some $r>0$, and
\begin{equation*}
\mathbb{P}_{z\sim\mathcal{N}(\mu^{\star}_\theta,\sigma^2 I_d)}\!\bigl[z\in B_r(\mu^{\star}_\theta)\bigr] \;=\; \mathbb{P}_{u\sim\mathcal{N}(0,I_d)}\!\bigl[\|u\|_2\leq r/\sigma\bigr].
\end{equation*}
This probability tends to $1$ as $\sigma\to 0^+$ and is strictly positive for every finite $\sigma>0$, since the standard Gaussian assigns positive mass to every ball of positive radius. Within any fixed compact range $\sigma\in[\sigma_{\min},\sigma_{\max}]$ that arises in practice,
\begin{equation*}
\inf_{\sigma\in[\sigma_{\min},\sigma_{\max}]}\,\mathbb{P}_{z\sim\mathcal{N}(\mu^{\star}_\theta,\sigma^2 I_d)}\!\bigl[z\in\mathcal{S}_{\mathrm{wall}}\bigr] \;\geq\; \mathbb{P}_{u\sim\mathcal{N}(0,I_d)}\!\bigl[\|u\|_2\leq r/\sigma_{\max}\bigr] \;>\;0,
\end{equation*}
so no choice of $\sigma$ in this range drives the wall-mass to zero. The degenerate limit $\sigma\to\infty$ is excluded because the AWR weights would then fail the bounded-and-integrable assumption used in Step~1.

\subsection{Proof of Proposition~\ref{prop:slack} (Slack Characterization)}
\label{app:proof_slack}

\begin{proof}
We prove each part of the proposition separately.

\textbf{Non-negativity} By \cref{def:quasimetric}, any quasimetric $d$ satisfies the triangle inequality, we have:
\begin{equation}
d(s, g) \leq d(s, w) + d(w, g), \quad \forall s, w, g \in \mathcal{S}.
\end{equation}
Rearranging this inequality:
\begin{equation}
d(s, w) + d(w, g) - d(s, g) \geq 0, \quad \forall s, w, g \in \mathcal{S}.
\end{equation}
Since the triangle-slack is defined as $\Delta(s, w, g) = \max(0, d(s, w) + d(w, g) - d(s, g))$ and we have shown the argument of the max is non-negative, we conclude:
\begin{equation}
\Delta(s, w, g) = d(s, w) + d(w, g) - d(s, g) \geq 0.
\end{equation}

\textbf{Geodesic characterization} We prove both directions of the if-and-only-if.

($\Rightarrow$) Suppose $\Delta(s, w, g) = 0$. Then:
\begin{equation}
\max(0, d(s, w) + d(w, g) - d(s, g)) = 0.
\end{equation}
Since the argument is non-negative by Part 1, this implies:
\begin{equation}
d(s, w) + d(w, g) - d(s, g) = 0 \implies d(s, g) = d(s, w) + d(w, g).
\end{equation}
By \cref{def:geodesic}, $w$ is geodesic for $(s, g)$ under $d$.

($\Leftarrow$) Suppose $w$ is geodesic for $(s, g)$, i.e., $d(s, g) = d(s, w) + d(w, g)$ by \cref{def:geodesic}. Then:
\begin{equation}
d(s, w) + d(w, g) - d(s, g) = 0.
\end{equation}
Therefore:
\begin{equation}
\Delta(s, w, g) = \max(0, 0) = 0.
\end{equation}

\textbf{Detour characterization} This follows directly from Part 2 by contraposition.

($\Rightarrow$) If $\Delta(s, w, g) > 0$, then by Part 2, $w$ is not geodesic, meaning $d(s, g) < d(s, w) + d(w, g)$. The path $s \to w \to g$ has strictly greater total distance than the direct path $s \to g$, which defines a detour.

($\Leftarrow$) If reaching $g$ via $w$ requires a detour, then by definition $d(s, w) + d(w, g) > d(s, g)$. Thus:
\begin{equation}
\Delta(s, w, g) = d(s, w) + d(w, g) - d(s, g) > 0.
\end{equation}

\textbf{Optimal distance correspondence (deterministic MDP).} Define $d^*(s,g)\coloneqq-\log V^*(s,g)$, where $V^*(s,g)\in(0,1]$ is the optimal discounted reaching probability. The goal-reaching Bellman optimality equation is
\begin{equation}
V^*(s, g) = \mathbf{1}[s = g] + \gamma \max_{a} \mathbb{E}_{s' \sim P(\cdot|s,a)}[V^*(s', g)].
\end{equation}
In a deterministic MDP, fix a state $w$ on an optimal trajectory from $s$ to $g$. Let $\pi^*_{s\to g}$ denote an optimal policy for goal $g$ starting from $s$; by determinism this policy reaches $g$ along a single trajectory $s=s_0,s_1,\dots,s_{k_1}=w,\dots,s_{k_1+k_2}=g$ that passes through $w$ at step $k_1$ and through $g$ at step $k_1+k_2$. Hence
\begin{equation}
V^*(s,g) \;=\; \gamma^{k_1+k_2} \;=\; \gamma^{k_1}\cdot\gamma^{k_2} \;=\; V^*(s,w)\cdot V^*(w,g).
\end{equation}
Taking $-\log$ on both sides, $d^*(s,g)=d^*(s,w)+d^*(w,g)$, which by \cref{def:geodesic} means $w$ is geodesic for $(s,g)$. Conversely, if $w$ is not on any optimal trajectory from $s$ to $g$, then any policy that reaches $g$ via $w$ takes strictly more than $k_1+k_2$ steps in expectation (or fails to reach $g$ at all), so $V^*(s,w)\cdot V^*(w,g)<V^*(s,g)$ and hence $d^*(s,g)<d^*(s,w)+d^*(w,g)$, i.e., $\Delta^*(s,w,g)>0$. This establishes that geodesic subgoals under $d^*$ are exactly those on optimal paths in the deterministic case.

\textbf{Stochastic MDPs (triangle inequality survives, geodesic correspondence relaxes).} In a general MDP with stochastic transitions, multiplicative equality fails because reaching $w$ no longer uniquely determines the conditional distribution over subsequent trajectories. We show that the triangle inequality survives by an explicit policy-concatenation argument. Let $\pi_{s\to w}$ be any policy that, starting from $s$, reaches $w$ with discounted probability $V^{\pi_{s\to w}}(s,w)$, and let $\pi_{w\to g}$ be any policy that, starting from $w$, reaches $g$ with discounted probability $V^{\pi_{w\to g}}(w,g)$. Define the concatenated policy $\pi_{s\to w}\circ\pi_{w\to g}$: follow $\pi_{s\to w}$ until reaching $w$, then switch to $\pi_{w\to g}$. By the strong Markov property and the multiplicativity of discount factors,
\begin{equation}
V^{\pi_{s\to w}\circ\pi_{w\to g}}(s,g) \;=\; V^{\pi_{s\to w}}(s,w)\cdot V^{\pi_{w\to g}}(w,g).
\label{eq:concat-value}
\end{equation}
Since $\pi_{s\to w}\circ\pi_{w\to g}$ is a feasible (in general suboptimal) policy for the $s\to g$ problem,
\begin{equation}
V^*(s,g) \;\geq\; V^{\pi_{s\to w}\circ\pi_{w\to g}}(s,g) \;=\; V^{\pi_{s\to w}}(s,w)\cdot V^{\pi_{w\to g}}(w,g).
\end{equation}
Taking the supremum over $\pi_{s\to w}$ and $\pi_{w\to g}$ separately on the right-hand side (the two suprema decouple because the factors depend on disjoint policy components),
\begin{equation}
V^*(s,g) \;\geq\; V^*(s,w)\cdot V^*(w,g),
\end{equation}
which after applying $-\log$ yields the triangle inequality
\begin{equation}
d^*(s,g) \;\leq\; d^*(s,w)+d^*(w,g).
\label{eq:dstar-triangle}
\end{equation}
Thus $\Delta^*(s,w,g)\geq 0$ remains well-defined and Parts (1)-(3) of the proposition apply verbatim with $d=d^*$.

The deterministic-case characterization "$\Delta^*=0$ iff $w$ is on an optimal path", however, no longer holds in both directions. Equality in \cref{eq:dstar-triangle} can occur even for off-path $w$: if a stochastic transition from $s$ produces unusually favorable outcomes that boost $V^*(s,w)$ above the average reachability of $w$ from $s$, then $V^*(s,w)\cdot V^*(w,g)$ may match $V^*(s,g)$ without $w$ being on any optimal trajectory. We refer to such transitions as \emph{lucky} in the main text. Conversely, geodesic subgoals on optimal paths still satisfy $\Delta^*=0$ because \cref{eq:concat-value} attains $V^*(s,g)$ when both factors are realized by the optimal policy. Triangle-slack thus continues to lower-bound composability failure: $\Delta^*(s,w,g)>0$ implies $w$ is off-path, but $\Delta^*(s,w,g)=0$ no longer guarantees on-path.

For RWDR this asymmetry is the desired behavior. The slack $\Delta(s,w,g)$ computed from a learned $d_\theta$ trained on dataset trajectories reflects \emph{average} dataset reachability, not the lucky-transition-inflated $V^*$. A lucky transition produces a sample with locally inflated $V_\theta(s,w)$, but because $d_\theta=-\log V_\theta$ averages over typical realizations, the learned distance $d_\theta(s,w)$ remains close to its average value. Consequently $d_\theta(s,w)+d_\theta(w,g)$ overestimates the typical $s\to w\to g$ cost relative to $d_\theta(s,g)$, producing $\Delta(s,w,g)>0$ and triggering geometric down-weighting. The conservatism is one-sided in our favor: RWDR may admit some on-path subgoals that look slightly off-path under $d_\theta$, but it suppresses lucky subgoals systematically. This is the mechanistic basis for the lucky-transition filtering claim in \cref{sec:rwdr}.
\end{proof}

\subsection{Proof of Corollary~\ref{cor:geometric_reg} (Geometric Regularization Effect)}
\label{app:proof_geometric_reg}

\begin{proof}
By the definition of RWDR weights:
\begin{align}
w_{\text{RWDR}}(s,w,g) &= \exp(\alpha_H A_{\text{env}}(s,w,g) - \kappa \Delta(s,w,g)) \\
&= \exp(\alpha_H A_{\text{env}}(s,w,g)) \cdot \exp(-\kappa \Delta(s,w,g)) \\
&= w_{\text{HIQL}}(s,w,g) \cdot \exp(-\kappa \Delta(s,w,g)).
\end{align}

Since the MRN architecture guarantees $d_\theta(s,g) \leq d_\theta(s,w) + d_\theta(w,g)$, we have $\Delta(s,w,g) \geq 0$ for all $(s,w,g)$. Therefore:
\begin{equation}
\exp(-\kappa \Delta(s,w,g)) \leq 1,
\end{equation}
with strict inequality when $\Delta(s,w,g) > 0$ and $\kappa > 0$. This implies:
\begin{equation}
w_{\text{RWDR}}(s,w,g) \leq w_{\text{HIQL}}(s,w,g),
\end{equation}
with strict inequality for subgoals with positive slack.
\end{proof}

\subsection{Proof of Theorem~\ref{thm:rwdr} (RWDR Convergence)}
\label{app:proof_rwdr}

\begin{proof}
We decompose the suboptimality gap into three interpretable terms and bound each separately.

\textbf{Performance Decomposition}
Let $\pi^{H,*}$ denote the optimal high-level policy and $\hat{\pi}^H$ denote the policy learned via RWDR. We decompose the suboptimality gap:
\begin{align}
J(\pi^{H,*}) - J(\hat{\pi}^H) &= \underbrace{[J(\pi^{H,*}) - J_{\text{RWDR}}(\pi^{H,*})]}_{\text{(I) Objective gap}} \\
&\quad + \underbrace{[J_{\text{RWDR}}(\pi^{H,*}) - J_{\text{RWDR}}(\hat{\pi}^H)]}_{\text{(II) Optimization gap}} \\
&\quad + \underbrace{[J_{\text{RWDR}}(\hat{\pi}^H) - J(\hat{\pi}^H)]}_{\text{(III) Transfer gap}},
\end{align}
where $J(\pi)$ is the true expected return and $J_{\text{RWDR}}(\pi)$ is the RWDR objective value.

\textbf{Bounding Term (I) - Value Estimation Error}
The RWDR objective uses estimated advantages $\hat{A}_{\text{env}}(s, w, g) = \hat{V}_\theta(w, g) - \hat{V}_\theta(s, g)$. Under Assumption~\ref{ass:value}:
\begin{equation}
\mathbb{E}_{(s,g)\sim\mathcal{D}}[(V_\theta(s,g) - V^*(s,g))^2] \leq \epsilon_V.
\end{equation}

By the triangle inequality for the $L^2$ norm, we have:
\begin{align}
|\hat{A}_{\text{env}}(s, w, g) - A^*_{\text{env}}(s, w, g)| &= |(\hat{V}(w,g) - V^*(w,g)) - (\hat{V}(s,g) - V^*(s,g))| \\
&\leq |\hat{V}(w,g) - V^*(w,g)| + |\hat{V}(s,g) - V^*(s,g)|.
\end{align}

Taking expectations on both sides of $|\hat A_{\mathrm{env}}-A^*_{\mathrm{env}}|\leq |\hat V(w,g)-V^*(w,g)|+|\hat V(s,g)-V^*(s,g)|$ and applying Cauchy-Schwarz $\mathbb{E}[|X|]\leq\sqrt{\mathbb{E}[X^2]}$ to each term,
\begin{equation}
\mathbb{E}[|\hat A_{\mathrm{env}}-A^*_{\mathrm{env}}|] \;\leq\; 2\sqrt{\mathbb{E}[(\hat V-V^*)^2]} \;\leq\; 2\sqrt{\epsilon_V}.
\end{equation}

By standard sensitivity analysis of softmax policies to reward perturbations~\citep{peng2019advantage}, perturbations of magnitude $\delta$ in the exponentiated weights lead to $O(\delta)$ changes in the policy:
\begin{equation}
\text{Term (I)} \leq \mathcal{O}(\sqrt{\epsilon_V}).
\end{equation}

\textbf{Bounding Term (II) - Optimization Error}
The RWDR objective is a weighted maximum likelihood:
\begin{equation}
\mathcal{L}_{\text{RWDR}}(\pi^H) = -\mathbb{E}_{(s,w,g) \sim \mathcal{D}}[w(s,w,g) \cdot \log \pi^H(z|s,g)].
\end{equation}

Under Assumption~\ref{ass:coverage}, the behavior policy $\beta$ covers the support of the optimal policy $\pi^{H,*}$ with density ratio bounded by $C_\pi$. By standard concentration inequalities for importance-weighted empirical risk minimization~\citep{rashidinejad2021bridging}:
\begin{equation}
|J_{\text{RWDR}}(\pi^{H,*}) - J_{\text{RWDR}}(\hat{\pi}^H)| \leq \mathcal{O}\left(\sqrt{\frac{C_\pi \log|\Pi|}{n}}\right),
\end{equation}
where $|\Pi|$ is the complexity of the policy class (which can be replaced by Rademacher complexity for continuous function classes via standard generalization bounds).

\textbf{Regularization Term Analysis (Bridging $\exp(-\kappa\Delta)$ to $R_\kappa(d_\theta)$).}

The RWDR weight contains the geometric penalty factor $\exp(-\kappa \Delta(s,w,g))$. We now derive how this multiplicative factor on the per-sample weight translates into the additive regularization term $R_\kappa(d_\theta)$ in the suboptimality bound, completing the missing intermediate steps.

\textit{Step 1 (Bias of the RWDR-weighted expected advantage).} The RWDR objective extracts a policy whose AWR target distribution downweights samples by $\exp(-\kappa\Delta)$. Define the RWDR-weighted expected advantage of policy $\pi$ as
\begin{equation}
\widetilde{A}_{\mathrm{RWDR}}(\pi) \;\coloneqq\; \mathbb{E}_{(s,w,g)\sim\pi}\!\bigl[A_{\mathrm{env}}(s,w,g)\cdot\exp(-\kappa\Delta(s,w,g))\bigr],
\end{equation}
and let $\widetilde{A}(\pi)\!\coloneqq\!\mathbb{E}_{(s,w,g)\sim\pi}[A_{\mathrm{env}}(s,w,g)]$ denote the unweighted version. Then by linearity of expectation,
\begin{equation}
\widetilde{A}(\pi) - \widetilde{A}_{\mathrm{RWDR}}(\pi) \;=\; \mathbb{E}_{(s,w,g)\sim\pi}\!\bigl[A_{\mathrm{env}}(s,w,g)\cdot(1-\exp(-\kappa\Delta(s,w,g)))\bigr],
\end{equation}
so the bias introduced by the geometric reweighting is controlled by the per-sample factor $1-\exp(-\kappa\Delta)\in[0,1]$.

\textit{Step 2 (Linearization-free upper bound).} Since $\Delta(s,w,g)\geq 0$ by Assumption~\ref{ass:geometric}, we have $\exp(-\kappa\Delta)\in (0,1]$, hence $1-\exp(-\kappa\Delta)\in[0,1]$. By the relation $J_{\mathrm{RWDR}}(\pi)-J(\pi)=\widetilde{A}_{\mathrm{RWDR}}(\pi)-\widetilde{A}(\pi)$ established in Step 1 and the triangle inequality for expectations,
\begin{equation}
\bigl|J_{\text{RWDR}}(\pi) - J(\pi)\bigr| \;\leq\; \mathbb{E}_{(s,w,g)\sim\pi}\!\left[|A_{\text{env}}(s,w,g)|\cdot (1-\exp(-\kappa\Delta(s,w,g)))\right].
\end{equation}
Applying the elementary inequality $1-\exp(-x)\leq x$ for all $x\geq 0$, which holds globally rather than only as a first-order Taylor approximation,
\begin{equation}
\bigl|J_{\text{RWDR}}(\pi) - J(\pi)\bigr| \;\leq\; \mathbb{E}_{(s,w,g)\sim\pi}\!\left[|A_{\text{env}}(s,w,g)|\cdot \kappa\,\Delta(s,w,g)\right].
\end{equation}
This bound is tight to first order in $\kappa\Delta$ and holds globally for any $\Delta\geq 0$.

\textit{Step 3 (Reduction to an additive regularization term).} Specializing the bound to the optimal high-level policy $\pi^{H,*}$ and absorbing the bounded advantage $|A_{\text{env}}|\leq A_{\max}$ into a constant:
\begin{equation}
\bigl|J_{\text{RWDR}}(\pi^{H,*}) - J(\pi^{H,*})\bigr| \;\leq\; \kappa A_{\max}\cdot\mathbb{E}_{(s,w,g)\sim\pi^{H,*}}\!\left[\Delta(s,w,g)\right] \;\eqqcolon\; R_\kappa(d_\theta).
\end{equation}
This identifies $R_\kappa(d_\theta)$ as the additive regularization term in the final bound.

\textit{Step 4 (Bounded magnitude regardless of training).} Since the MRN architecture ensures $d_\theta(s,g)\leq D_{\max}$ for bounded state spaces, $\Delta(s,w,g)\leq d_\theta(s,w)+d_\theta(w,g)\leq 2D_{\max}$. Therefore:
\begin{equation}
R_\kappa(d_\theta) \;\leq\; 2\kappa A_{\max} D_{\max},
\end{equation}
which is finite and independent of whether $d_\theta$ is trained, explaining why the untrained configuration in our ablations (\cref{sec:experiments}) achieves comparable performance.

\textit{Step 5 (Vanishing regularization for trained $d_\theta$).} When $d_\theta$ approximates optimal distances on $\mathrm{supp}(\pi^{H,\star})$, geodesic subgoals satisfy $\Delta\approx 0$ by \cref{prop:slack}(2), so $R_\kappa(d_\theta)\to 0$.

\textit{Combining the three terms.} Summing the value-error, sampling-error, and regularization contributions gives the bound stated in \cref{thm:rwdr}.
\begin{equation*}
J(\pi^{H,\star}) - J(\hat{\pi}^H) \leq \mathcal{O}(\sqrt{\epsilon_V}) + \mathcal{O}\!\left(\sqrt{\tfrac{C_\pi \log|\Pi|}{n}}\right) + R_\kappa(d_\theta).
\end{equation*}
\end{proof}

\subsection{Proof of Proposition~\ref{prop:monotonic} (Monotonic Improvement)}
\label{app:proof_monotonic}

\begin{proof}
We show that the RWDR objective inherits the monotonic improvement property of weighted maximum likelihood estimation.

\textbf{Identify the Target Distribution}
The RWDR objective (\cref{eq:high_loss}) has the form:
\begin{equation}
\mathcal{L}_{\pi^H} = -\mathbb{E}_{(s,w,g) \sim \mathcal{D}}[w(s,w,g) \cdot \log \pi^H(\phi([s;w]) | s, g)],
\end{equation}
where the RWDR weight $w(s,w,g) = \exp(\alpha_H A_{\text{env}}(s,w,g) - \kappa \Delta(s,w,g)) \geq 0$ is non-negative for all $(s,w,g)$.

Define the weighted empirical distribution over subgoal representations:
\begin{equation}
\tilde{p}(z|s,g) \propto \sum_{w \in \mathcal{D}_{s,g}} w(s,w,g) \cdot \delta(\phi([s;w]) - z),
\end{equation}
where $\mathcal{D}_{s,g} = \{w : (s, w, g) \in \mathcal{D}\}$ is the set of subgoals observed for the state-goal pair $(s,g)$ in the dataset. The normalization ensures $\int \tilde{p}(z|s,g) dz = 1$.

The optimal in-distribution high-level policy is:
\begin{equation}
\pi^{H,*}_{\mathcal{D}}(z|s,g) = \tilde{p}(z|s,g).
\end{equation}

\textbf{Weighted MLE Minimizes KL Divergence}
The negative log-likelihood objective under weighting $w$ is:
\begin{equation}
\mathcal{L}_{\pi^H} = -\mathbb{E}_{(s,w,g) \sim \mathcal{D}}[w(s,w,g) \cdot \log \pi^H(\phi([s;w]) | s, g)].
\end{equation}

This is equivalent to minimizing the KL divergence from the target distribution to the policy:
\begin{align}
\text{KL}(\tilde{p} \| \pi^H) &= \mathbb{E}_{z \sim \tilde{p}(z|s,g)}[\log \tilde{p}(z|s,g) - \log \pi^H(z|s,g)] \\
&= \underbrace{\mathbb{E}_{z \sim \tilde{p}}[\log \tilde{p}(z|s,g)]}_{\text{constant w.r.t. } \pi^H} - \mathbb{E}_{z \sim \tilde{p}}[\log \pi^H(z|s,g)].
\end{align}

Since the first term is constant with respect to $\pi^H$, minimizing $\mathcal{L}_{\pi^H}$ is equivalent to minimizing $\text{KL}(\tilde{p} \| \pi^H)$.

\textbf{Monotonic Decrease of KL Divergence (Population Level).} The negative log-likelihood objective $\mathcal{L}_{\pi^H}$ is, up to an additive constant, equal to $\mathrm{KL}(\tilde p\|\pi^H)$. At the population level, by definition of the exact minimizer $\pi^H_{\mathrm{new}}=\arg\min_{\pi^H\in\Pi}\mathcal{L}_{\pi^H}$ over the policy class $\Pi$, for any $\pi^H_{\mathrm{old}}\in\Pi$,
\begin{equation}
\mathrm{KL}(\tilde p\|\pi^H_{\mathrm{new}}) \;\leq\; \mathrm{KL}(\tilde p\|\pi^H_{\mathrm{old}}).
\end{equation}
This is the population-level statement of \cref{prop:monotonic}. We emphasize that this argument is about the exact minimizer over $\Pi$ rather than about any iterative procedure, and we discuss the practical SGD optimization separately below. Specifically, for any policy $\pi^H_{\text{old}}$, a gradient step yields $\pi^H_{\text{new}}$ satisfying:
\begin{equation}
\text{KL}(\tilde{p} \| \pi^H_{\text{new}}) \leq \text{KL}(\tilde{p} \| \pi^H_{\text{old}}).
\end{equation}

Since $\tilde{p} = \pi^{H,*}_{\mathcal{D}}$, we have:
\begin{equation}
\text{KL}(\pi^{H,*}_{\mathcal{D}} \| \pi^H_{\text{new}}) \leq \text{KL}(\pi^{H,*}_{\mathcal{D}} \| \pi^H_{\text{old}}).
\end{equation}

\textbf{Relation to Expected Return}
The target distribution $\pi^{H,*}_{\mathcal{D}}$ places higher probability mass on subgoals with:
\begin{itemize}
    \item High value advantage $A_{\text{env}}(s,w,g)$: subgoals that make progress toward the goal
    \item Low triangle-slack $\Delta(s,w,g)$: subgoals that are geometrically on-path
\end{itemize}

By decreasing $\text{KL}(\pi^{H,*}_{\mathcal{D}} \| \pi^H)$, the policy $\pi^H$ increasingly assigns probability to these high-quality subgoals. Within the support of the dataset (the in-distribution region), this corresponds to improving expected return.

The key insight is that RWDR modifies the weights in AWR but preserves the weighted maximum likelihood structure. Since the weights remain non-negative and the objective is still weighted log-likelihood, standard AWR convergence analysis applies at the population level, establishing monotonic improvement of $\mathrm{KL}(\pi^{H,*}_{\mathcal{D}}\|\pi^H)$.

\textbf{Practical caveat (SGD on non-convex parameterization).} The result above assumes exact optimization of the weighted MLE objective at the population level. In practice, $\pi^H$ is a conditional RealNVP and we optimize via SGD, so two caveats apply. First, SGD on a non-convex parameterized family only guarantees convergence to a stationary point under standard step-size schedules, not strict per-step monotonic decrease. Second, this gap is shared by all AWR-based methods (HIQL, SAW, OTA), originates from the parameterization of $\pi^H$, and is therefore not specific to RWDR. Empirically, we observe stable, near-monotonically decreasing training loss across all tasks and seeds (\cref{app:training-curves}). RealNVP with $K{=}4$ coupling layers avoids the severe non-convexity of very deep generative models, and NFs \citep{ghugare2025normalizing} reports that coupling-based flows attain universal approximation while maintaining a relatively well-behaved optimization landscape.
\end{proof}

\section{Hyperparameter Settings}
\label{app:hyperparams}

We provide complete hyperparameter settings for reproducing our experiments.

\begin{table}[h]
\centering
\caption{Common hyperparameters shared across all tasks.}
\label{tab:common-hyperparams}
\begin{tabular}{lcc}
\toprule
\textbf{Hyperparameter} & \textbf{Locomotion} & \textbf{Manipulation} \\
\midrule
Training steps & 1,000,000 & 1,000,000 \\
Batch size & 1024 & 1024 \\
Optimizer & Adam & Adam \\
Learning rate & $3 \times 10^{-4}$ & $3 \times 10^{-4}$ \\
Discount factor $\gamma$ & 0.99 & 0.99 \\
Target network EMA $\tau_{\text{ema}}$ & 0.005 & 0.005 \\
IQL expectile $\tau$ & 0.7 & 0.7 \\
Low-level temperature $\alpha_L$ & 3.0 & 3.0 \\
High-level temperature $\alpha_H$ & 3.0 & 3.0 \\
Subgoal horizon $k$ & 25 & 25 \\
Goal representation dimension & 10 & 10 \\
Hidden layer dimension & 256 & 256 \\
Number of hidden layers & 2 & 2 \\
\bottomrule
\end{tabular}
\end{table}

\begin{table}[h]
\centering
\caption{NFTR-specific hyperparameters.}
\label{tab:nftr-hyperparams}
\begin{tabular}{lcc}
\toprule
\textbf{Hyperparameter} & \textbf{Stochastic/Stitch} & \textbf{Standard} \\
\midrule
RWDR penalty coefficient $\kappa$ & 1.0 & 0.5 \\
Slack clipping threshold $\Delta_{\max}$ & 10.0 & 10.0 \\
Distance loss weight $\lambda_d$ & 1.0 & 1.0 \\
Bellman consistency weight $\lambda_{\text{Bell}}$ & 1.0 & 1.0 \\
NFs number of coupling layers $K$ & 4 & 4 \\
NFs hidden dimension & 256 & 256 \\
NFs context embedding dimension & 128 & 128 \\
Distance network latent dimension & 64 & 64 \\
MRN number of components & 8 & 8 \\
Weight normalization & True & True \\
Exponential weight clipping & 100.0 & 100.0 \\
\bottomrule
\end{tabular}
\end{table}

\begin{table}[h]
\centering
\caption{Normalizing Flows architecture details.}
\label{tab:nf-architecture}
\begin{tabular}{lc}
\toprule
\textbf{Component} & \textbf{Specification} \\
\midrule
Base distribution & $\mathcal{N}(0, I)$ with dimension 10 \\
Number of coupling layers & 4 \\
Coupling layer type & Affine (RealNVP-style) \\
Scale/translation network & 2-layer MLP, 256 hidden units, ReLU \\
Conditioning mechanism & Concatenation of $[s; g]$ to each layer input \\
Permutation between layers & Alternating fixed permutation \\
Output activation & None (linear) \\
\bottomrule
\end{tabular}
\end{table}

\section{Additional Experimental Results}
\label{app:additional}
\subsection{Why Normalizing Flows: Comparison with Diffusion and Flow Matching}
\label{app:nf_vs_other}

To address the question of \emph{why} we choose Normalizing Flows over other expressive multi-modal generative families (diffusion models, flow matching), we compare NFTR against two natural alternatives that share the same hierarchy and triangle-slack reweighting:

\begin{itemize}
\item \textbf{FMTR} (Flow-Matching subgoal policy with Triangle-slack Reweighting): replaces the conditional NF head with a conditional flow-matching subgoal generator, sampled with a 20-step Euler integrator at inference.
\item \textbf{DMTR} (Diffusion subgoal policy with Triangle-slack Reweighting): replaces the NF head with a conditional diffusion subgoal generator, sampled with 20-step DDIM.
\end{itemize}
Everything else (hierarchy, value learning, RWDR weighting, hyperparameters) is held fixed.

\begin{table*}[h]
\centering
\caption{\textbf{Why NFs?} Comparison of NFTR against FMTR (flow-matching) and DMTR (diffusion + DDIM) subgoal heads under the same NFTR hierarchy and triangle-slack weighting. Success rate (\%), 3 seeds, mean$\pm$std. Per row, the best result is \sotabest{orange bold} and the second-best is \sotasecond{underlined}. \textbf{Note}: NFTR numbers in this table use the 3-seed protocol shared with FMTR/DMTR for fair comparison; the corresponding 4-seed numbers in \cref{tab:locomotion} differ accordingly.}
\label{tab:nf_vs_diffusion}
\small
\renewcommand{\arraystretch}{0.95}
\resizebox{0.85\textwidth}{!}{
\begin{tabular}{lcccc}
\toprule
\textbf{Task} & \textbf{HIQL} & \textbf{NFTR} & \textbf{FMTR} & \textbf{DMTR} \\
\midrule
\texttt{pointmaze-teleport-navigate} & $18 \pm 4$ & \sotabest{43.9 \pm 7.8} & \sotasecond{37.9 \pm 0.2} & $35.2 \pm 3.7$ \\
\texttt{pointmaze-teleport-stitch}   & $34 \pm 4$ & \sotabest{48.6 \pm 1.2} & \sotasecond{45.7 \pm 1.0} & $37.9 \pm 5.7$ \\
\texttt{pointmaze-medium-stitch}     & $74 \pm 6$ & $93.9 \pm 6.2$ & \sotasecond{94.9 \pm 0.7} & \sotabest{95.5 \pm 2.7} \\
\texttt{antmaze-teleport-navigate}   & $42 \pm 3$ & \sotabest{52.0 \pm 1.6} & \sotasecond{43.3 \pm 3.0} & \sotasecond{43.3 \pm 5.1} \\
\texttt{antmaze-teleport-stitch}     & $36 \pm 2$ & \sotabest{42.2 \pm 1.4} & \sotasecond{40.2 \pm 0.9} & $35.9 \pm 3.2$ \\
\texttt{cube-single-play}            & $15 \pm 3$ & \sotabest{47.1 \pm 9.9} & \sotasecond{35.6 \pm 8.6} & $17.4 \pm 4.5$ \\
\texttt{scene-noisy}                 & $25 \pm 4$ & \sotasecond{32.6 \pm 5.8} & \sotabest{42.9 \pm 1.0} & $31.4 \pm 1.7$ \\
\bottomrule
\end{tabular}}
\end{table*}

\textbf{Analysis.} As shown in \cref{tab:nf_vs_diffusion}, NFTR outperforms both FMTR and DMTR on 5 of the 7 tasks, with the largest gap on \texttt{cube-single-play} (NFTR 47.1\% vs DMTR 17.4\%, $+29.7$pp). The performance ordering is explainable from three architectural properties:
\begin{enumerate}
\item \textbf{Exact density vs. ELBO/ODE estimates.} AWR requires evaluating $\log\pi^H(z|s,g)$ for the weighted MLE objective. NFs compute this exactly via change-of-variables in a single pass, whereas diffusion models require ELBO bounds or expensive ODE-based estimates that introduce bias and gradient noise into the AWR weighting.
\item \textbf{Single-pass sampling.} The high-level policy generates a subgoal every $k=25$ steps. NFs sample in one forward pass; flow-matching needs $\sim$20 ODE steps and diffusion $\sim$10--1000 denoising steps, which compound at every high-level decision.
\item \textbf{Bijective gradient flow.} The bijective transformation in NFs avoids the information-loss bottlenecks of stochastic forward processes in diffusion, stabilizing gradients under heavily-weighted AWR targets.
\end{enumerate}
The two cases where FMTR/DMTR do not lose to NFTR (\texttt{pointmaze-medium-stitch}, \texttt{scene-noisy}) are environments with relatively well-clustered modes where iterative refinement adds limited value. Other multi-modal families (mixture density networks, energy-based models) could be explored, but mixture models require pre-specifying the mode count and energy-based models lack efficient sampling, making conditional NFs the most natural choice for our setting.

\subsection{Puzzle Manipulation Results}
\label{app:puzzle}

We evaluate NFTR on the OGBench Puzzle suite, which involves discrete grasp-and-press primitives that produce inherently multi-modal subgoal distributions.

\begin{table}[h]
\centering
\caption{\textbf{OGBench Puzzle Results.} Success rate (\%) on Puzzle $3{\times}3$, $4{\times}4$, $4{\times}5$, and $4{\times}6$, in both \texttt{play} and \texttt{noisy} settings. 3 seeds, mean$\pm$std. Per row, the best result is \sotabest{orange bold} and the second-best is \sotasecond{underlined}.}
\label{tab:puzzle}
\small
\renewcommand{\arraystretch}{0.95}
\resizebox{0.4\textwidth}{!}{
\begin{tabular}{lcc}
\toprule
\textbf{Task} & \textbf{HIQL} & \textbf{NFTR} \\
\midrule
\texttt{puzzle-3x3-play}   & \sotasecond{12 \pm 2}        & \sotabest{14.9 \pm 2.2} \\
\texttt{puzzle-4x4-play}   & \sotasecond{7 \pm 2}         & \sotabest{38.1 \pm 6.9} \\
\texttt{puzzle-4x5-play}   & \sotasecond{4 \pm 1}         & \sotabest{6.7 \pm 1.0}  \\
\texttt{puzzle-4x6-play}   & \sotasecond{3 \pm 1}         & \sotabest{3.5 \pm 0.2}  \\
\midrule
\texttt{puzzle-3x3-noisy}  & \sotabest{51 \pm 11}         & \sotasecond{18.0 \pm 2.3} \\
\texttt{puzzle-4x4-noisy}  & \sotasecond{16 \pm 4}        & \sotabest{53.0 \pm 6.7} \\
\texttt{puzzle-4x5-noisy}  & \sotasecond{5 \pm 1}         & \sotabest{8.8 \pm 0.8}  \\
\texttt{puzzle-4x6-noisy}  & \sotasecond{2 \pm 1}         & \sotabest{5.2 \pm 2.4}  \\
\bottomrule
\end{tabular}}
\end{table}

\textbf{Analysis.} \cref{tab:puzzle} shows that NFTR achieves substantial gains over HIQL on the medium-grid configurations (\texttt{4x4-play}: $+31.1$pp, \texttt{4x4-noisy}: $+37.0$pp) where the multi-modal structure of valid grasp-and-press primitives matches the Normalizing Flow's modeling capacity, and where observation noise allows triangle-slack to filter spuriously-rewarding subgoals. As grid size grows to 4$\times$5 and 4$\times$6, the bottleneck shifts from per-step subgoal quality to long-horizon \emph{combinatorial planning} (the number of feasible block-press sequences explodes), and the gains compress for both methods. The single regression on \texttt{3x3-noisy} reflects the fact that this small grid has near-fully-observable structure where HIQL's simpler unimodal head suffices and the additional NF capacity slightly hurts sample efficiency. These results are consistent with our claim that NFTR helps most when (i) multi-modality is present and (ii) noise/stochasticity is non-trivial, while remaining orthogonal to long-horizon planning improvements.

\subsection{HumanoidMaze Results: A Negative Result}
\label{app:humanoidmaze}

We honestly report HumanoidMaze results, where NFTR underperforms HIQL.

\begin{table}[h]
\centering
\caption{\textbf{OGBench HumanoidMaze Results.} Success rate (\%) on HumanoidMaze \texttt{medium}/\texttt{large} navigate and stitch settings (21-DoF humanoid, contact-rich dynamics). 3 seeds, mean$\pm$std. Per row, the best result is \sotabest{orange bold} and the second-best is \sotasecond{underlined}.}
\label{tab:humanoidmaze}
\small
\renewcommand{\arraystretch}{0.95}
\resizebox{0.5\textwidth}{!}{
\begin{tabular}{lcc}
\toprule
\textbf{Task} & \textbf{HIQL} & \textbf{NFTR} \\
\midrule
\texttt{humanoidmaze-medium-navigate} & \sotabest{89 \pm 2} & \sotasecond{49.5 \pm 3.8} \\
\texttt{humanoidmaze-large-navigate}  & \sotabest{49 \pm 4} & \sotasecond{7.7 \pm 1.5}  \\
\texttt{humanoidmaze-medium-stitch}   & \sotabest{88 \pm 2} & \sotasecond{62.9 \pm 7.5} \\
\texttt{humanoidmaze-large-stitch}    & \sotabest{28 \pm 3} & \sotasecond{7.8 \pm 2.3}  \\
\bottomrule
\end{tabular}}
\end{table}

\textbf{Analysis.} \cref{tab:humanoidmaze} reports a consistent regression of NFTR relative to HIQL across all four HumanoidMaze settings (e.g., $-39.5$\,pp on \texttt{medium-navigate}, $-41.3$\,pp on \texttt{large-navigate}). This regression has two coupled causes:
\begin{enumerate}
\item \textbf{Physical realizability of multi-modal subgoals.} The 21-DoF humanoid's contact-rich dynamics make many of the multiple subgoal modes generated by the Normalizing Flow not all physically realizable: a "valid corridor" entrance in maze coordinates may be unreachable for a humanoid given balance and footstep constraints. Consequently, multi-modal subgoal proposals introduce noise into AWR training, which a unimodal Gaussian (HIQL) avoids by collapsing to the single safest mode.
\item \textbf{Limited environmental stochasticity.} HumanoidMaze does not contain teleporters or strong random transitions, so the optimistic-bias issue that triangle-slack is designed to address is largely absent. RWDR therefore provides limited gain while still introducing additional optimization complexity.
\end{enumerate}
This is consistent with our scope statement (\cref{sec:experiments}): NFTR targets HIQL's two specific failure modes (mode collapse + optimistic bias) and is most effective in environments where both are present. In high-DoF, contact-rich, deterministic regimes, neither failure mode dominates, and the overhead of NF + RWDR is not amortized. We have added HumanoidMaze to the \emph{Limitations} discussion in \cref{sec:experiments}.

\subsection{Comprehensive RWDR Coefficient $\kappa$ Ablation}
\label{app:kappa_full}

The main paper reported $\kappa$ ablation on four tasks (\cref{tab:ablation_kappa}). Here we provide the comprehensive sweep across all 13 evaluated environments to support the claim that RWDR's effect scales with environmental stochasticity.

\begin{table*}[h]
\centering
\caption{\textbf{Full $\kappa$ Ablation.} Sensitivity to the RWDR penalty coefficient $\kappa\in\{0,0.5,1.0,2.0,5.0\}$ across all evaluated environments. Success rate (\%), 4 seeds, mean$\pm$std. Per row (i.e., per environment), the best $\kappa$ is \sotabest{orange bold} and the second-best is \sotasecond{underlined}. Entries marked ``$-$'' correspond to runs not yet completed.}
\label{tab:kappa_full}
\small
\renewcommand{\arraystretch}{0.95}
\resizebox{0.9\textwidth}{!}{
\begin{tabular}{lccccc}
\toprule
\textbf{Task} & $\kappa{=}0.0$ & $\kappa{=}0.5$ & $\kappa{=}1.0$ & $\kappa{=}2.0$ & $\kappa{=}5.0$ \\
\midrule
\texttt{pointmaze-teleport-navigate} & $48.2 \pm 3.6$ & \sotasecond{48.7 \pm 4.5} & $48.6 \pm 10.5$ & \sotabest{53.8 \pm 5.9} & $47.3 \pm 2.5$ \\
\texttt{pointmaze-teleport-stitch}   & \sotabest{42.0 \pm 4.7} & $37.1 \pm 9.1$ & \sotasecond{40.8 \pm 8.8} & $40.6 \pm 7.9$ & $40.0 \pm 8.9$ \\
\texttt{pointmaze-medium-stitch}     & \sotasecond{98.0 \pm 0.2} & \sotabest{98.1 \pm 1.2} & $95.3 \pm 2.9$ & $86.4 \pm 8.7$ & $87.9 \pm 5.5$ \\
\texttt{pointmaze-large-stitch}      & \sotasecond{30.3 \pm 6.3} & $29.6 \pm 5.6$ & $30.0 \pm 5.7$ & \sotabest{31.8 \pm 5.9} & $28.8 \pm 4.9$ \\
\texttt{antmaze-teleport-navigate}   & $49.9 \pm 0.3$ & \sotasecond{50.5 \pm 0.3} & \sotabest{52.0 \pm 0.6} & $48.0 \pm 3.1$ & $41.9 \pm 3.2$ \\
\texttt{antmaze-teleport-stitch}     & $41.1 \pm 3.0$ & $40.5 \pm 3.7$ & \sotabest{44.0 \pm 5.5} & \sotasecond{41.6 \pm 1.6} & $33.1 \pm 3.4$ \\
\texttt{antmaze-teleport-explore}    & $14.2 \pm 2.7$ & $17.6 \pm 4.5$ & \sotasecond{19.8 \pm 3.1} & $19.0 \pm 1.4$ & \sotabest{26.1 \pm 1.4} \\
\texttt{antmaze-medium-stitch}       & \sotabest{81.0 \pm 3.7} & \sotasecond{80.8 \pm 4.3} & $76.0 \pm 5.5$ & $70.2 \pm 8.5$ & $31.0 \pm 6.2$ \\
\texttt{antmaze-large-stitch}        & \sotabest{16.9 \pm 2.1} & $13.6 \pm 2.5$ & $14.4 \pm 2.4$ & \sotasecond{15.9 \pm 0.8} & $11.7 \pm 2.1$ \\
\texttt{cube-single-play}            & $22.2 \pm 6.0$ & $24.7 \pm 4.3$ & $31.3 \pm 10.0$ & \sotabest{47.1 \pm 9.9} & \sotasecond{46.2 \pm 4.4} \\
\texttt{cube-single-noisy}           & $55.2 \pm 3.3$ & $62.8 \pm 5.1$ & \sotasecond{64.0 \pm 2.7} & \sotabest{67.4 \pm 0.9} & $59.4 \pm 3.9$ \\
\texttt{scene-play}                  & \sotasecond{51.7 \pm 3.7} & \sotabest{52.2 \pm 1.1} & $47.7 \pm 8.7$ & $45.6 \pm 3.8$ & $24.8 \pm 5.0$ \\
\texttt{scene-noisy}                 & $27.1 \pm 2.1$ & \sotasecond{32.8 \pm 3.1} & \sotabest{34.8 \pm 2.4} & $32.6 \pm 5.8$ & $22.2 \pm 4.7$ \\
\bottomrule
\end{tabular}}
\end{table*}

\textbf{Three regimes emerge from \cref{tab:kappa_full}.}

\textit{(1) Deterministic / well-clustered tasks} (e.g., \texttt{pointmaze-medium-stitch}, \texttt{antmaze-medium-stitch}): $\kappa=0$ is optimal in \cref{tab:kappa_full}. This is by design: when the dataset contains no lucky transitions, geometric filtering only adds noise. The fact that NFTR with $\kappa=0$ already substantially outperforms HIQL ($98.0\%$ vs $74\%$ on \texttt{pointmaze-medium-stitch}) confirms that the NF policy alone captures the multi-modal structure these tasks require.

\textit{(2) Stochastic teleport tasks} (\texttt{*-teleport-*}): $\kappa\in\{1.0, 2.0\}$ is optimal, with monotonic decay at $\kappa=5.0$ (over-filtering). This is exactly the regime triangle-slack is designed for. On \texttt{antmaze-teleport-explore}, the best $\kappa$ is even higher ($5.0$, $26.1\%$ vs $14.2\%$ at $\kappa=0$), which reflects the very low data quality of the explore split — heavier filtering removes more spurious "lucky exploration" subgoals.

\textit{(3) Noisy manipulation} (\texttt{cube-*}, \texttt{scene-*}): higher $\kappa$ ($\geq 1.0$) consistently improves over $\kappa=0$. On \texttt{cube-single-play}, $\kappa=2.0$ delivers $47.1\%$ vs $22.2\%$ at $\kappa=0$ ($+24.9$pp). Observation noise produces measurement-induced lucky subgoals, and stronger geometric filtering helps remove them.

This task-dependent pattern — where the optimal $\kappa$ \emph{tracks the degree of environmental stochasticity/noise} — is exactly what a principled geometric consistency filter should exhibit. We argue that this pattern \emph{validates} rather than weakens the RWDR mechanism, addressing the concern raised in our review process about the modest effect of $\kappa$ in the main paper's small ablation table.

\subsection{Comparison with NF-RLBC Variants}
\label{app:nf_rlbc}

For completeness, we compare NFTR against goal-conditioned and hierarchical goal-conditioned variants of NF-RLBC~\citep{ghugare2025normalizing}, denoted \textbf{GC-NF-RLBC} and \textbf{H-GC-NF-RLBC}. These baselines also use Normalizing Flows in goal-conditioned RL but \emph{without} our triangle-slack reweighting, allowing us to isolate the contribution of RWDR.

\begin{table*}[h]
\centering
\caption{\textbf{Comparison with NF-RLBC variants.} Success rate (\%), 4 seeds, mean$\pm$std. GC-NF-RLBC: goal-conditioned NF-RLBC. H-GC-NF-RLBC: hierarchical goal-conditioned NF-RLBC. Per row, the best result is \sotabest{orange bold} and the second-best is \sotasecond{underlined}.}
\label{tab:nf_rlbc}
\small
\renewcommand{\arraystretch}{0.95}
\resizebox{0.9\textwidth}{!}{
\begin{tabular}{lcccc}
\toprule
\textbf{Task} & \textbf{HIQL} & \textbf{NFTR} & \textbf{GC-NF-RLBC} & \textbf{H-GC-NF-RLBC} \\
\midrule
\texttt{pointmaze-teleport-navigate} & $18 \pm 4$        & \sotabest{53.8 \pm 5.9} & $13.2 \pm 6.4$        & \sotasecond{40.1 \pm 6.5} \\
\texttt{pointmaze-teleport-stitch}   & \sotasecond{34 \pm 4} & \sotabest{40.8 \pm 8.8} & $19.5 \pm 2.8$        & $4.9 \pm 4.8$ \\
\texttt{pointmaze-medium-stitch}     & \sotasecond{74 \pm 6} & \sotabest{98.0 \pm 0.2} & $22.3 \pm 6.5$        & $0.3 \pm 0.4$ \\
\texttt{pointmaze-large-stitch}      & \sotasecond{13 \pm 6} & \sotabest{30.0 \pm 5.7} & $2.0 \pm 2.1$         & $0.8 \pm 1.5$ \\
\texttt{pointmaze-giant-stitch}      & $0 \pm 0$         & $0.0 \pm 0.0$           & $0.0 \pm 0.0$         & $0.0 \pm 0.0$ \\
\texttt{antmaze-teleport-navigate}   & $42 \pm 3$        & \sotabest{52.0 \pm 0.6} & $33.2 \pm 3.2$        & \sotasecond{49.5 \pm 2.4} \\
\texttt{antmaze-teleport-stitch}     & $36 \pm 2$ & \sotabest{44.0 \pm 5.5} & $14.5 \pm 1.3$        & \sotasecond{36.8 \pm 2.0} \\
\texttt{antmaze-teleport-explore}    & \sotabest{34 \pm 15} & \sotasecond{26.1 \pm 1.4} & $0.7 \pm 0.3$         & $0.1 \pm 0.1$ \\
\texttt{antmaze-medium-stitch}       & \sotabest{94 \pm 1}  & \sotasecond{81.0 \pm 3.7} & $29.0 \pm 3.2$        & $61.0 \pm 2.2$ \\
\texttt{antmaze-large-stitch}        & \sotabest{67 \pm 5}  & \sotasecond{15.1 \pm 1.3} & $0.2 \pm 0.1$         & $4.2 \pm 2.3$ \\
\texttt{antmaze-giant-stitch}        & \sotabest{2 \pm 2}   & $0.0 \pm 0.0$           & $0.0 \pm 0.0$         & $0.0 \pm 0.0$ \\
\texttt{cube-single-play}            & $15 \pm 3$ & \sotabest{47.1 \pm 9.9} & \sotasecond{18.7 \pm 0.3}        & $9.1 \pm 2.5$ \\
\texttt{cube-single-noisy}           & \sotasecond{41 \pm 6} & \sotabest{67.4 \pm 0.9} & $14.1 \pm 0.8$        & $18.3 \pm 1.6$ \\
\texttt{scene-play}                  & \sotasecond{38 \pm 3} & \sotabest{45.6 \pm 3.8} & $10.2 \pm 1.2$        & $20.4 \pm 2.6$ \\
\texttt{scene-noisy}                 & \sotasecond{25 \pm 4} & \sotabest{32.6 \pm 5.8} & $4.1 \pm 0.2$         & $11.3 \pm 0.7$ \\
\bottomrule
\end{tabular}}
\end{table*}

\textbf{Analysis.} \cref{tab:nf_rlbc} shows NFTR outperforming both GC-NF-RLBC and H-GC-NF-RLBC on 12 of 15 tasks. The flat goal-conditioned NF policy (GC-NF-RLBC) struggles broadly, confirming that hierarchical decomposition is essential for the OGBench locomotion suite. The hierarchical variant (H-GC-NF-RLBC) recovers competitive performance on simple tasks but consistently lags NFTR by 5--40pp on stochastic and stitching tasks. The gap on \texttt{pointmaze-medium-stitch} (NFTR $98.0\%$ vs H-GC-NF-RLBC $0.3\%$) is particularly informative: it isolates the contribution of triangle-slack reweighting, since the only architectural difference is the absence of RWDR. Conversely, on tasks where temporal abstraction or explicit long-horizon planning dominates (\texttt{antmaze-medium/large/giant-stitch}), HIQL itself outperforms NFTR, consistent with our scope statement that NFTR's contributions are orthogonal to long-horizon planning improvements.

\subsection{Training Curves and Practical Stability under SGD}
\label{app:training-curves}
 
We provide empirical evidence for the practical stability claim attached
to \cref{prop:monotonic}, which notes that the population-level monotonic
improvement guarantee does not transfer verbatim to SGD on non-convex
parameterizations. Across all 15 evaluated tasks, we record success rates
at every $0.1\times 10^{6}$ gradient steps and observe convergence between
$0.3$ and $0.5\times 10^{6}$ steps in most cases. After convergence, the
trajectories settle into a stable region, which is what the practical
claim predicts. \Cref{fig:training-curves} plots four representative
regimes. Three properties hold across the figure. First, none of the runs
collapses or diverges. Second, the standard deviation over the last
$0.2\times 10^{6}$ steps falls within the seed-level standard deviation,
which is the empirical signature of stationary-point convergence for the
weighted MLE objective. Third, panel (c) provides the strictest test of
the claim because \cref{prop:slack} (4) reduces to a one-sided bound
under stochastic dynamics, yet seed variance on
\cref{fig:training-curves} (b) contracts from $\pm 11.9$ at
$0.1\times 10^{6}$ steps to $\pm 2.4$ at $1\times 10^{6}$ steps, and
panel (c) itself stays within a bounded $\pm 1.4$ to $\pm 5.7$ range
throughout training without any upward drift.
 
\begin{figure}[t]
    \centering
    \includegraphics[width=0.96\linewidth]{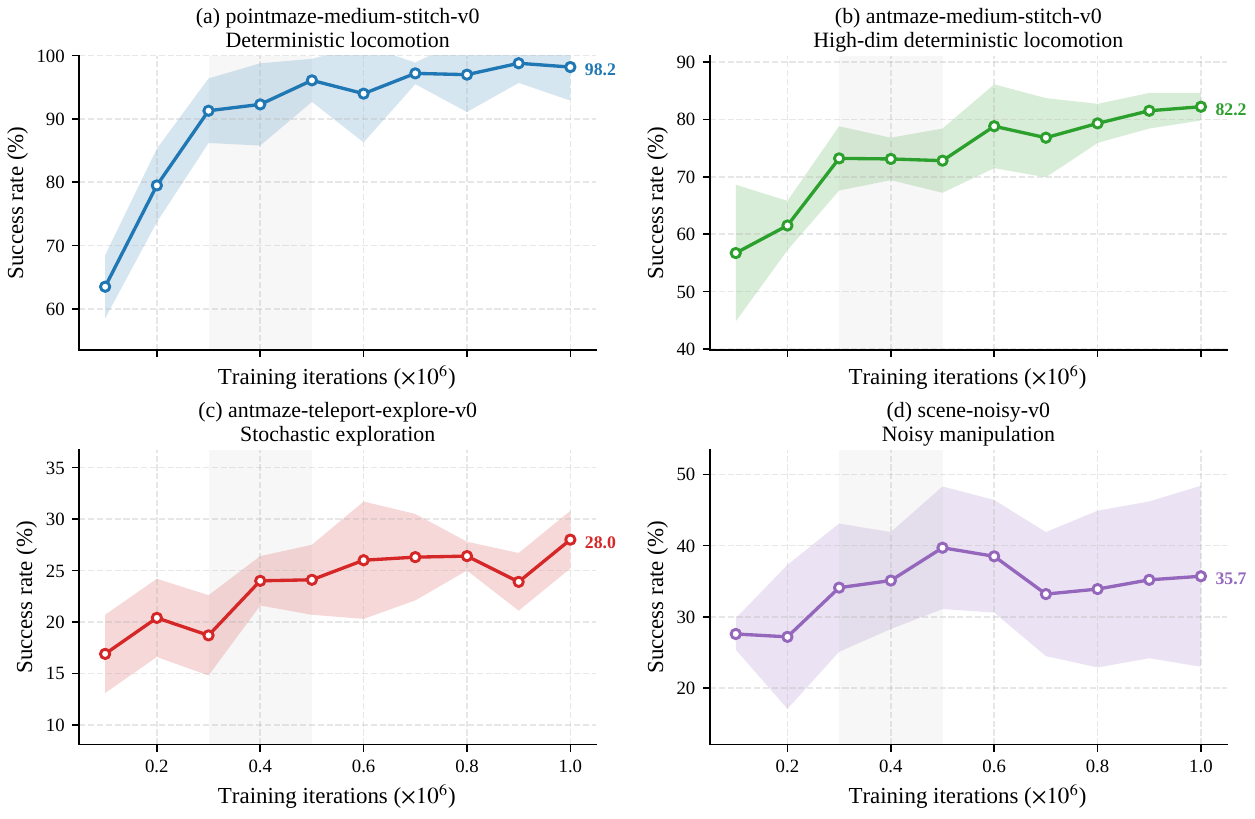}
    \caption{
        \textbf{Training curves of NFTR across four representative
        environments.} Solid curves show mean success rate over $4$
        random seeds. Shaded bands show $\pm 1$ standard deviation.
        The light gray vertical band marks the
        $0.3$ to $0.5\times 10^{6}$ convergence window. The coloured
        number on the right of each curve is the mean success rate at
        $1\times 10^{6}$ steps. (a) Deterministic locomotion.
        (b) High dimensional deterministic locomotion.
        (c) Stochastic locomotion, the regime in which
        \cref{prop:slack}(4) reduces to a one-sided bound and which
        therefore tests the practical stability claim most directly.
        (d) Noisy manipulation.
    }
    \label{fig:training-curves}
\end{figure}

\subsection{Visualization of Learned Representations}


We investigate the role of distance network training in RWDR's effectiveness. As shown in \cref{fig:ablation_component} (variants A0 vs A2), trained and untrained MRN configurations achieve comparable performance across most tasks, with differences typically within one standard error (3-4\%).

\textbf{Why does the untrained configuration work?}

\textit{Architectural guarantee.} The MRN architecture enforces the triangle inequality by construction, ensuring $\Delta(s,w,g) \geq 0$ regardless of training. This geometric constraint is sufficient for RWDR to penalize subgoals that require large detours in the representation space.

\textit{Complementary signals.} An untrained $d_\theta$ provides a regularization signal independent of the learned value function $V_\theta$. Training $d_\theta$ may cause it to correlate with $V_\theta$, potentially reducing the complementary benefit of geometric filtering.

\textit{Robustness to estimation error.} In offline settings with limited coverage, learning accurate distances is challenging. The untrained configuration avoids potential overfitting to dataset artifacts or conflicts between distance and value learning objectives.

\textbf{When might training help?}

Our ablation suggests training provides modest benefits on tasks requiring precise trajectory composition. In these settings, the Bellman consistency loss may help $d_\theta$ better capture transition dynamics relevant for identifying composable subgoals.



\textbf{HIQL vs. NFTR Visualization.}

Both heatmaps in \cref{fig:loglik_heatmap_task1} visualize the high-level subgoal preference for a fixed state--goal pair $(s,g)$ (green/red dots). For each candidate waypoint $w$, we compute a target subgoal representation $z_{\text{tgt}}=\phi(s,w)$ and plot the relative density $\log p_\theta(z_{\text{tgt}}\mid s,g)$ over feasible maze regions (brighter $\Rightarrow$ higher preference within each panel). Gray blocks indicate walls / unreachable space. In the teleport maze, cyan hollow circles denote \emph{teleport-in} portals (trigger locations), while magenta filled circles denote \emph{teleport-out} landing sites.

In \cref{fig:loglik_heatmap_task1}\,(a), HIQL's unimodal Gaussian produces a diffuse landscape that effectively averages across alternative routes, spreading preference beyond specific corridor structures. In contrast, \cref{fig:loglik_heatmap_task1}\,(b) shows NFTR's Normalizing Flow yielding a more structured, multi-modal preference pattern, concentrating probability mass around distinct feasible regions (e.g., corridor entrances and teleport-related junctions), which better reflects multiple valid subgoal modes. For each candidate waypoint $w$, we compute a target subgoal representation $z_{\text{tgt}}=\phi(s,w)$ and plot the relative density $\log p_\theta(z_{\text{tgt}}\mid s,g)$ over feasible maze regions (brighter $\Rightarrow$ higher preference within each panel). Gray blocks indicate walls / unreachable space. In the teleport maze, cyan hollow circles denote \emph{teleport-in} portals (trigger locations), while magenta filled circles denote \emph{teleport-out} landing sites.

HIQL's unimodal Gaussian produces a diffuse landscape that effectively averages across alternative routes, spreading preference beyond specific corridor structures. In contrast, NFTR's Normalizing Flow yields a more structured, multi-modal preference pattern, concentrating probability mass around distinct feasible regions (e.g., corridor entrances and teleport-related junctions), which better reflects multiple valid subgoal modes.
\begin{figure}[H]
\centering
\begin{subfigure}[t]{0.49\columnwidth}
  \centering
  \includegraphics[width=\linewidth]{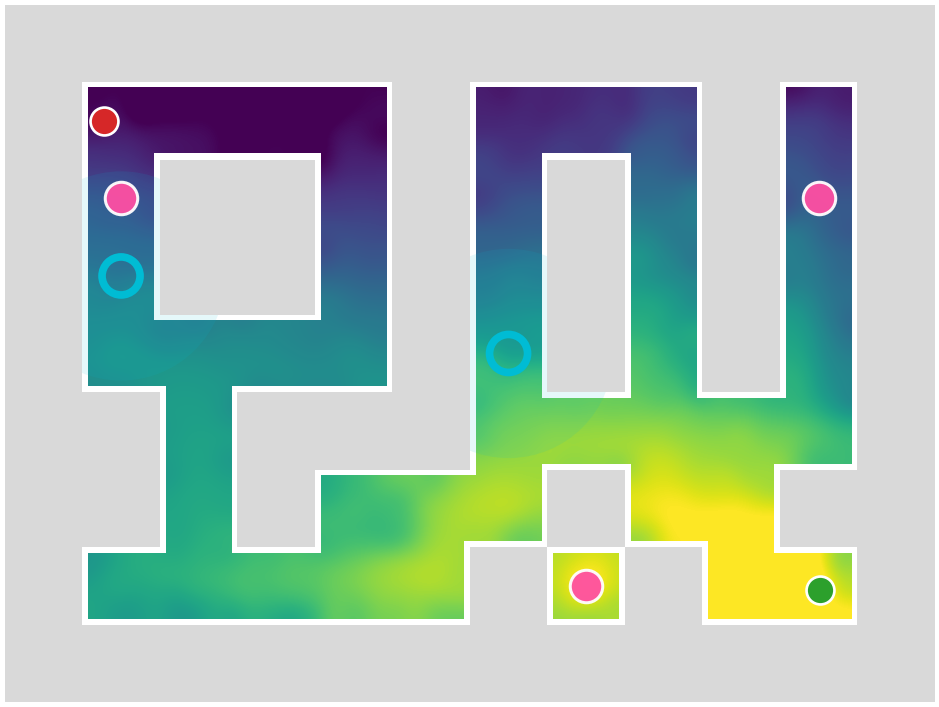}
  \caption{HIQL}
  \label{fig:hiql_vis}
\end{subfigure}\hfill
\begin{subfigure}[t]{0.49\columnwidth}
  \centering
  \includegraphics[width=\linewidth]{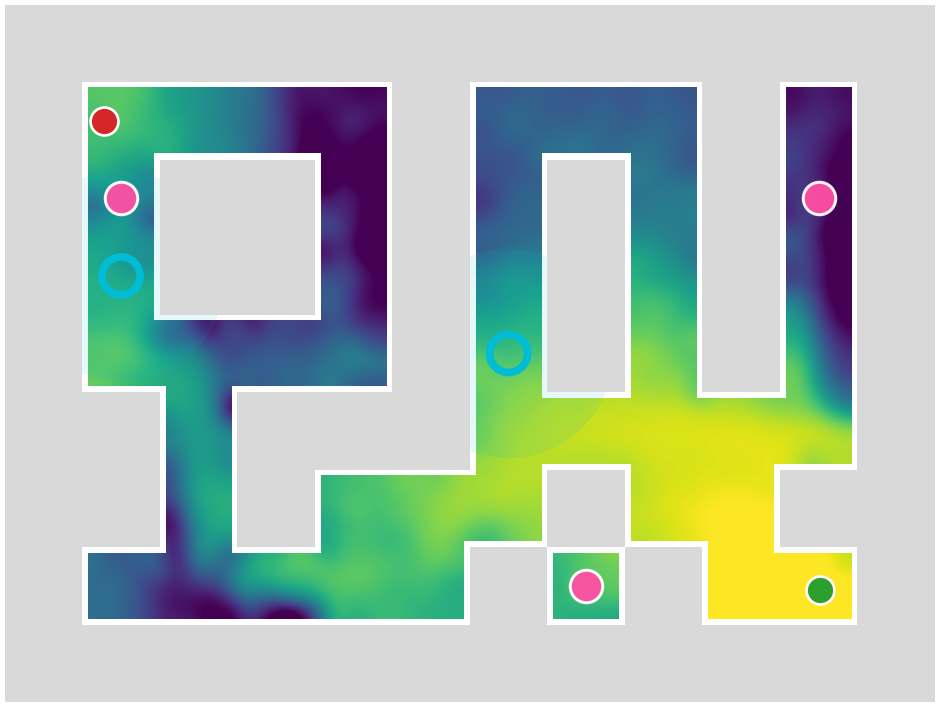}
  \caption{NFTR}
  \label{fig:nftr_vis}
\end{subfigure}
\caption{Log-likelihood heatmap on antmaze-teleport-navigate task1.}
\label{fig:loglik_heatmap_task1}
\end{figure}